\newtheorem{lem}{Lemma}
\newtheorem{thm}{Theorem}
\newtheorem{defn}{Definition}
\newtheorem{coro}{Corollary}
\newtheorem{exple}{Example}
\newtheorem{rem}{Remark}
\newcommand{\E}[1]{\mathbb{E}\left[{#1}\right]}
\newcommand{\V}[1]{\mathrm{Var}\left[{#1}\right]}
\DeclareMathOperator*{\argmax}{arg\,max}
\crefname{equation}{}{}
\Crefname{equation}{}{}
\crefname{thm}{theorem}{theorems}
\Crefname{thm}{Theorem}{Theorems}
\crefname{clm}{claim}{claims}
\Crefname{clm}{Claim}{Claims}
\Crefname{coro}{Corollary}{Corollaries}
\Crefname{lem}{Lemma}{Lemmas}
\Crefname{sec}{Section}{Sections}
\crefname{app}{appendix}{appendices}
\Crefname{app}{Appendix}{Appendices}
\crefname{prop}{proposition}{propositions}
\Crefname{prop}{Proposition}{Propositions}
\Crefname{propty}{Property}{Properties}
\crefname{figure}{fig.}{figures}
\Crefname{figure}{Fig.}{Figures}
\crefname{defn}{definition}{definitions}
\Crefname{defn}{Definition}{Definitions}
\crefname{fact}{fact}{facts}
\Crefname{fact}{Fact}{Facts}
\crefname{appendix}{appendix}{appendices}
\Crefname{appendix}{Appendix}{Appendices}
\crefname{algo}{algorithm}{algorithms}
\Crefname{algo}{Algorithm}{Algorithms}
\crefname{algorithm}{algorithm}{algorithms}
\Crefname{algorithm}{Algorithm}{Algorithms}
\crefname{conj}{conjecture}{conjectures}
\Crefname{conj}{Conjecture}{Conjectures}
\crefname{obs}{observation}{observations}
\Crefname{obs}{Observation}{Observations}
\begin{document}

\newcommand{\totalCaches}{m}
\newcommand{\nFork}{n}
\newcommand{\nPartialFork}{r}
\newcommand{\kJoin}{k}
\newcommand{\xm}{x_m}

\newcommand{\estimateProb}{\tilde{p}}
\def \OO {\mathrm{O}}
\def \oo {\mathrm{o}}

%\newpage

\title{Active Distribution Learning from Indirect Samples}

\author{\IEEEauthorblockN{Samarth Gupta}
\IEEEauthorblockA{Dept. 
of ECE \\
Carnegie Mellon University \\
Pittsburgh, PA 15213\\
Email: samarthg@andrew.cmu.edu}
\and
\IEEEauthorblockN{Gauri Joshi}
\IEEEauthorblockA{Dept. 
of ECE \\
Carnegie Mellon University \\
Pittsburgh, PA 15213\\
Email: gaurij@andrew.cmu.edu}
\and
\IEEEauthorblockN{Osman Ya\u{g}an}
\IEEEauthorblockA{Dept. of ECE\\
Carnegie Mellon University \\
Pittsburgh, PA 15213\\
Email: oyagan@ece.cmu.edu}
}

 %Department of Electrical and Computer Engineering, Carnegie Mellon University, Pittsburgh, PA 15213 USA.
%}

\maketitle

%\listoftodos

\begin{abstract}
This paper studies the problem of {\em learning} the probability distribution $P_X$ of a discrete random variable $X$ using indirect and sequential samples. At each time step, we choose one of the possible $K$ functions, $g_1, \ldots, g_K$ and observe the corresponding sample $g_i(X)$. The goal is to estimate the probability distribution of $X$ by using a minimum number of such sequential samples.
%\GJ{The abstract sounds like a laundry-list of results. Should rewrite to highlight the main insights from the work}
This problem has several real-world applications including inference under non-precise information and privacy-preserving statistical estimation. We establish necessary and sufficient conditions on the functions $g_1, \ldots, g_K$ under which asymptotically consistent estimation is possible. We also derive lower bounds on the estimation error as a function of total samples and show that it is {\em order-wise} achievable. Leveraging these results, we propose an iterative algorithm that i) chooses the function to observe at each step based on past observations;  and ii) combines the obtained samples to estimate $p_X$. The performance of this algorithm is investigated numerically under various scenarios, and shown to outperform baseline approaches.
%; e.g., our algorithm achieves a lower error than the baseline for a given number of samples, and needs significantly fewer number of samples to predict $p_X$ within a given error.
%with comparison against the lower bound.  
\end{abstract}

\begin{IEEEkeywords}
distribution learning, hidden random variable, indirect samples, sequential decision-making
\end{IEEEkeywords}

%\textit{A full version of this paper is accessible at: \url{http://andrew.cmu.edu/user/gaurij/gupta2018learning.pdf}}

%\GJ{There is something grammatically weird about the current title. 'Distribution learning' should be 'distribution-learning' since learning is a verb. Maybe we can say 'Learning a Distribution from Indirect and Sequential Samples'. Or I actually prefer 'Sequential Distribution Estimation from Indirect Samples' }

%\SG{I decided to use step everywhere and not rounds as we were using terms like time step which make more sense}

\section{Introduction}
\label{sec:intro}

The modern world is rich with various types of data such as images, video, cloud job execution traces, social network data, and crowd-sourced survey data. These data can provide invaluable insights into the underlying random phenomenon which are generally not directly observable due to privacy concerns, or imprecise measurement mechanisms. For example, if we want to estimate the income distribution of a population, their salary data may not be public. However, it may be possible to estimate the income distribution using surveys about their spending on luxury goods, or whether their income is above or below some given thresholds. % \textcolor{red}{Add a couple of lines about applications here}

In this work we seek to design techniques to use indirect and correlated samples to estimate the probability distribution of a hidden random phenomenon. We consider a stylized model, shown in \Cref{fig:sys_model}, where a hidden variable $X$ can be sampled through functions $g_1(X),\ldots, g_K(X)$, referred to as \emph{arms}. Our objective is to accurately estimate the probability distribution of $X$ with the minimum number of samples; see Section \cref{sec:prob_formu} for a precise definition of the problem.

\subsection{Related Prior Work}
Learning the distribution of a random variable from its samples is a well-studied research problem \cite{valiant1984theory, kearns1994learnability,daskalakis2012learning} in information theory and theoretical computer science. Some works \cite{kamath2015learning, han2015minimax} are interested in finding the min-max or worst-case loss for various loss functions; e.g.,  L2-loss and Kullback-Liebler (KL) divergence. Some other works study the properties of distribution from samples observed \cite{acharya2017unified, acharya2014complexity,han2015adaptive, jiao2013universal}. Unlike the majority of the literature on distribution learning, here we assume that only functions $g_i(X)$ of the samples can be observed instead of direct samples of $X$. %In this framework, we seek to minimize the simpler expected L2-loss.

% \GJ{Add more references from information theory here. Currently we listing just two theory-CS references. Maybe we can cite some works by Jayadev Acharya and/or Alon Orlitsky}

\begin{figure}[t]
\vspace{-4mm}
    \centering
    \includegraphics[width=0.43\textwidth]{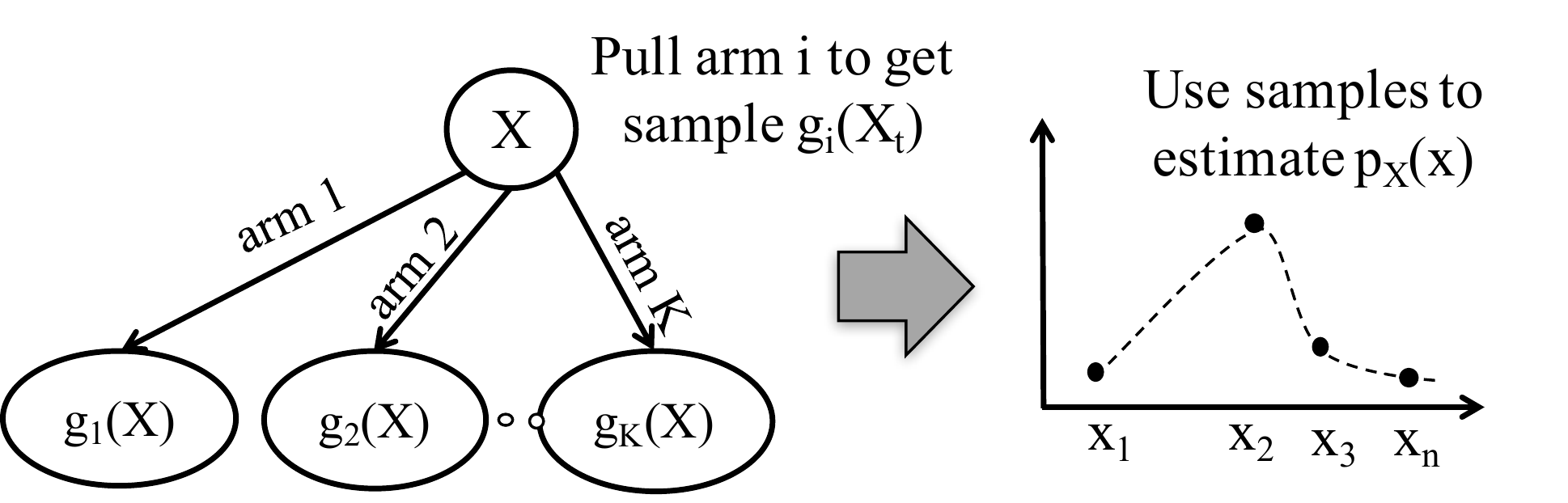}
    \caption{\sl At step $t$ we pull some arm $i$ and observe $g_i(X_t)$, where $X_t$ is an i.i.d.\ realization of the hidden variable $X$. Our objective is use the samples to estimate the distribution $p_X(x)$.}
    \label{fig:sys_model}
    \vspace{-4mm}
\end{figure}
Inferring a hidden random variable from indirect samples is also related to works in estimation theory \cite{kay1993fundamentals}, where the objective is to estimate a set of parameters $\theta$ using observations $y_1, \ldots, y_T$ that follow a model $p(Y|\theta)$. In our problem, the unknown distribution $P_X$ is analogous to the parameter $\theta$ while samples $g_i(X_t)$ correspond to the observations $y_1, \ldots, y_T$.  
%
%\SG{Reviewer Comment: Hidden Variable $X$ is analogous to $\theta$ or $p_X$ is analogous to $\theta$}
%
A key difference between our model and typical parameter estimation problems is that we decide on the arm (say, arm $k$) to be pulled in each time slot $t$ to obtain the corresponding sample $g_k(X_t)$. Our problem formulation falls under the class of sequential design of experiments \cite{robbins1985some, chernoff1959sequential}. Such sequential/active learning frameworks have been considered for the purposes of hypothesis testing in \cite{naghshvar2013active, golovin2010near, javdani2014near}. The aspect of choosing arm in each time step is also closely related to the multi-armed bandit (MAB) sequential decision-making framework \cite{bubeck2012regret, auer2002finite,agrawal2012analysis}. In the classical MAB framework \cite{lai1985asymptotically}, each arm gives a reward according to some unknown distribution that is independent across arms, and the objective is to maximize the total reward for a given number of pulls, or to identify the arm that has the largest mean reward with as few pulls as possible\cite{jamieson2014best,audibert2010best,kaufmann2016complexity,gabillon2012best}. In contrast, the arms $g_1(X), \ldots, g_K(X)$ are correlated through the common hidden variable $X$ in our formulation. 
% Another problem relevant to our work is the contextual bandits \cite{zhou15survey, agrawal2012analysis, sakulkar16stochastic}, where a context vector $\boldsymbol{x}$  governs the reward distribution for each arm; this is different from the common hidden variable $X$ considered in this paper. 
In most sequential experiment design and multi-armed bandit problems, the main strategy is to identify the single \lq\lq best" arm and then exploit it. What makes our formulation interesting is that there may not be a unique best arm for the purposes of learning the distribution of $X$. Instead, the optimal strategy will often involve a combination of arms to be pulled, with each arm being pulled a specific  number of times. It is also this aspect that makes our problem challenging since the optimal combination of arms to be pulled (to learn $P_X$) depends itself on the distribution $P_X$. 
%on the distribution  need to be identified 
%the goal will be to identify the  strategy will take the form of  combination of arms need to pulled in order to learn the distribution of $X$. 

\subsection{Main Contributions}
To the best of our knowledge, this is the first work to consider the problem of using sequential, indirect samples to learn the distribution of a hidden random variable. Our main contributions include i) deriving conditions on the functions $g_1(X), g_2(X), \ldots, g_K(X)$ needed for {\em asymptotically consistent} estimation of the hidden distribution; ii) deriving a lower bound on the estimation error and showing that it is {\em order-wise} achievable; and iii) proposing algorithms that sequentially decide which arm to pull and return an estimation of $P_X$ at each time step. Through simulations, our algorithms are also shown to outperform several baseline strategies in terms of error for a given number of pulls and the number of pulls needed to estimate $P_X$ within a given error. 
%successfully eliminate redundant arms; i.e, it never pulls an arm that provides str. Simulations indicate that the error performance of our algorithm is better than 

% \SG{Cite some books on Sequential Experiment Design: Can mention that focus of these works is on reward maximization or best arm identification. Our work differs from them as it does Distribution Learning from sequential samples. Also cite active sequential hypothesis testing.}

%\GJ{Since we will have more space (8 pages) if we submit to Allerton, we can have a more extensive literature review. Maybe one subsection on multi-arm bandits, one on distribution learning, one of sequential experimental design etc.}

%In \Cref{sec:prob_formu} we formally describe the problem of learning the distribution of random variable $X$ via sequential samples. \Cref{sec:results} gives conditions for asymptotically consistent estimation of the hidden distribution, and a lower bound on the error. In \Cref{sec:algo} we propose an algorithm to choose an arm in each slot, and combine samples to estimate $p_X$. In \Cref{sec:simulations}, we compare it with the lower bound and other strategies. Finally, \Cref{sec:conclu} gives concluding remarks and future directions.
%pulling one arm can provide us information about the distribution of $X$, which in turn can help estimate the expected reward from other arms. In other words, unlike the classical multi-arm bandit problem, the arms are correlated through the hidden variable $X$. 

%\input{applications}

\section{Problem Formulation}
\label{sec:prob_formu}

% \GJ{Notations $p_X$ and $p_1, p_2, .., p_n$ may be confused with each other. Perhaps we can change $p_X$ to $P_X = [p_1, p_2, .., p_n]^T$? Similarly for $\tilde{P}_X$.}

Consider a discrete random variable $X$ that can take values from a finite alphabet $\{x_1, x_2, \ldots, x_n \}$ with an {\em unknown} probability distribution $P_X = [p_1, p_2, \ldots, p_n ]^\intercal$. Throughout this paper, we assume $p_i > 0$ for all $i$. Our objective is to estimate this probability distribution using a sequence of independent samples from $K$ functions $\{g_1, g_2, \ldots, g_K\}$, where each $g_i$ is a mapping from $\{x_1, x_2, \ldots, x_n\}$ to $\mathbb{R}$; throughout, we refer to these functions also as {\em arms}.
More precisely, with $\{X_t: t=1, 2, \ldots\}$ denoting a sequence of independent and identically distributed (i.i.d.) realizations of $X$, we can choose and observe only one of the $K$ possible outcomes $g_1(X_t), \ldots, g_K(X_t)$, at each step $t \in \mathcal{N}$. Broadly speaking, for a given set of functions $\{g_1, g_2, \ldots, g_K\}$, our goal is to derive an efficient %and {\em powerful} 
%\GJ{What does powerful mean here? Maybe we should just say sample-efficient} 
algorithm i) to decide which function will be observed at each iteration step $t$, and ii) to come up with an estimate $\tilde{P}_X(t) = [\estimateProb_1(t), \estimateProb_2(t), \ldots, \estimateProb_n(t) ]^\intercal$ of the true probability distribution based on the observations until step $t$. Ultimately, we aim to minimize the mean-squared error of this estimation, formally defined below.

%The decision as to which arm is chosen in each time slot is referred as the \emph{action}, i.e, pull one out of the $K$ arms of the bandit. In slot $t$, if we take action $a_t$, where $a_t \in \{1, 2, \ldots, K \}$, we obtain the reward $g_{a_t}(X_t)$ where $X_t$ is an i.i.d. realization of $X$. The functions $g_1(X), g_2(X), \ldots g_K(X)$ are known. Our objective is to estimate the underlying probability distribution of $X$, i.e $f_X$ in minimum number of time slots. 

%Since the rewards $g_1(X), g_2(X), \ldots g_K(X)$ depend on a common source of randomness, pulling one arm can provide us information about the distribution of $X$, which in turn can help estimate the expected reward from other arms. In other words, unlike the classical multi-arm bandit problem, the arms are correlated through the hidden variable $X$. 

%Our goal is to get an estimate $\{\estimateProb_1, \estimateProb_2, \ldots, \estimateProb_n \}$ of the true probability distribution. 

% We use the expected squared $L2$ norm between estimated distribution and true distribution as the metric for estimation error. More formally our goal is to get an estimate $\{\estimateProb_1(t), \estimateProb_2(t), \ldots, \estimateProb_n(t) \}$ from observations till time $t$ such that $\E{\sum_{i = 1}^{n} (\estimateProb_i(t) - p_{i})^2}$ is minimized.

\begin{defn}[Estimation Error]
The error in estimating $P_X = [p_1, p_2, \ldots, p_n]^\intercal$ at step $t$ (i.e., after observing $t$ samples) is defined as
\begin{align}
\varepsilon(t) &= \E{\sum_{j=1}^{n} (\estimateProb_j(t)-p_j)^2}.
\end{align}
\label{defn:error_metric}
\end{defn}
Here, %for   convenience of notation, 
%we suppress the dependence of the error $\varepsilon$ on $t$, and 
 $\estimateProb_i(t)$ denotes the estimation obtained after observing $t$ samples $g_{c_1}(X_1), g_{c_2}(X_2), \ldots, g_{c_t}(X_t)$, where $c_{\tau} \in \{1,\ldots,K\}$ is the arm pulled at step $\tau$.
We now give two examples to illustrate and clarify the problem formulation.  
%Expectation
%define error metric

%Look at the other paper.

\begin{exple}
\label{exple:Recoverable}
 \Cref{fig:Recovery} shows an example in which $X$ takes three possible values $\{x_1, x_2, x_3\}$, and there are three arms, $g_1, g_2$, and $g_3$. The values of $g_1, g_2$, and $g_3$ corresponding to $x_1, x_2,x_3$ are illustrated in \Cref{fig:Recovery}. In arm 1, output $z_{1,2}$ can come from either $x_2$ or $x_3$. This ambiguity exists in output $z_{2,2}$ (between $x_1$ and $x_3$) in $g_2$ and in output $z_{3,1}$ (between $x_1$ and $x_2$) in $g_3$. Inspite of these ambiguities, it is possible to estimate $p_1$, $p_2$ and $p_3$ as we will show in \Cref{sec:conditions}.
\end{exple}

% \GJ{In these examples, shall we say 'consistent' instead of 'accurate'.}
% \GJ{Can also rethink the placement of these examples. They may fit better after the definition of asymptotically consistent estimation. Or move the definition of asymptotic consistency here. I vote for moving the asymptotic consistency definition to this section.}
% \SG{We define consistency in terms of error $\epsilon(t)$, here we are mentioning the components $p_2$ and $p_3$. I think using consistent here (before definition of consistent estimation) may cause confusion. We can discuss the placement of definition of consistency in meeting.}

\begin{exple}
\label{exple:NonRecoverable}
\Cref{fig:NoRecovery} illustrates an example with two arms, with each arm showing outputs corresponding to $\{x_1, x_2, x_3, x_4\}$. Arm $1$ has ambiguity coming from output of $x_2$ and $x_3$, whereas arm $2$ exhibits ambiguity in the output of $x_1, x_2$, and $x_3$. For this set of functions it is possible to estimate only $p_2 + p_3$ and nothing else can be known about $p_2$ and $p_3$, as we will prove in \Cref{sec:conditions}.
\end{exple}

% Notice that in \Cref{exple:Recoverable},  we can divide $t$ actions equally across the three arms and keep a count of the number of observations of $\{a, \ldots, f\}$. Then, we can form an estimation as $\{ \frac{3n_a}{t}, \frac{3n_c}{t}, \frac{3n_e}{t} \}$, where $n_a, \ldots, n_f$ are the number of times outputs $a, \ldots, f$ are observed, respectively, out of $t$ trials. The error of this estimator can be seen to be approaching to zero as $t$ goes to infinity. However, since it is not possible to resolve the ambiguity between $x_2$ and $x_3$ in \Cref{exple:NonRecoverable}, it is not possible to achieve an error that goes to zero as $t \rightarrow \infty$. In  \Cref{sec:conditions}, we look at a formal set of conditions on when it is possible to achieve zero error asymptotically.

We note that if a function $g_k$ is invertible, then every output sampled from $g_k$ will be uniquely matched to a single value (say, $x_j$) that $X$ can take without any ambiguity. In those cases, it would be optimal (in the sense of minimizing $\varepsilon(t)$ for each $t$) to pull $g_k$ at every step. We formally prove a more general version of this result in \Cref{thm:subset}.

% However it is possible to successfully estimate the true probability distribution in this case. If we divide $t$ actions equally across the three arms and keep a count of number of observations of $\{a,b,c, \dots, f\}$ and produce our estimation as $\{ \frac{3n_a}{t}, \frac{3n_d}{t}, \frac{3n_f}{t} \}$ ($n_a, n_b, \dots n_f$ are the number of times outputs $a, b \dots, f$ are observed, and $t$ is the total number of trials) one can show that this estimator will achieve an estimation error which goes to zero as $t \rightarrow \infty$. 

\begin{figure}[t]
    \centering
    \includegraphics[width=0.43\textwidth]{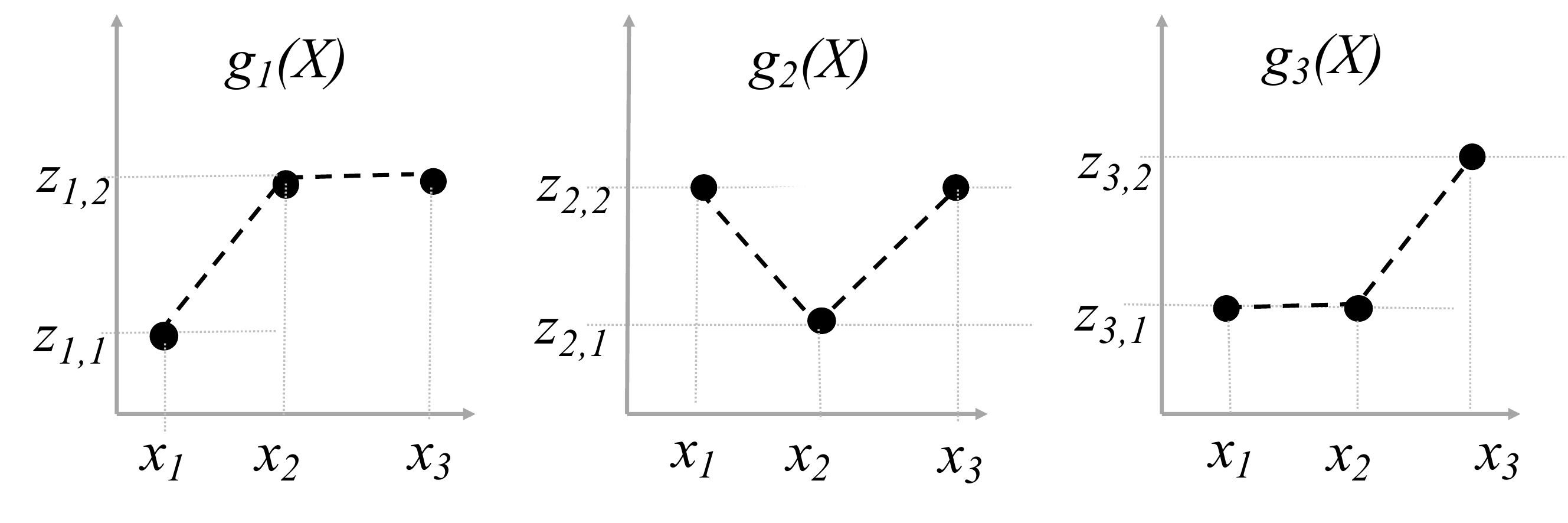}
    \vspace{-4mm}
\caption{\sl An example where it is possible to estimate $\{p_1, p_2, p_3\}$ asymptotically consistently (See \Cref{defn:consistent_estimation}) although no arm is invertible.} %Probability distribution can be estimated by pulling arms in a round robin manner and keeping empirical counts of outputs $a,c$ and $e$.}
    \label{fig:Recovery}
\end{figure}

\begin{figure}[t]
    %\vspace{-4mm}
    \centering
    \includegraphics[width=0.3\textwidth]{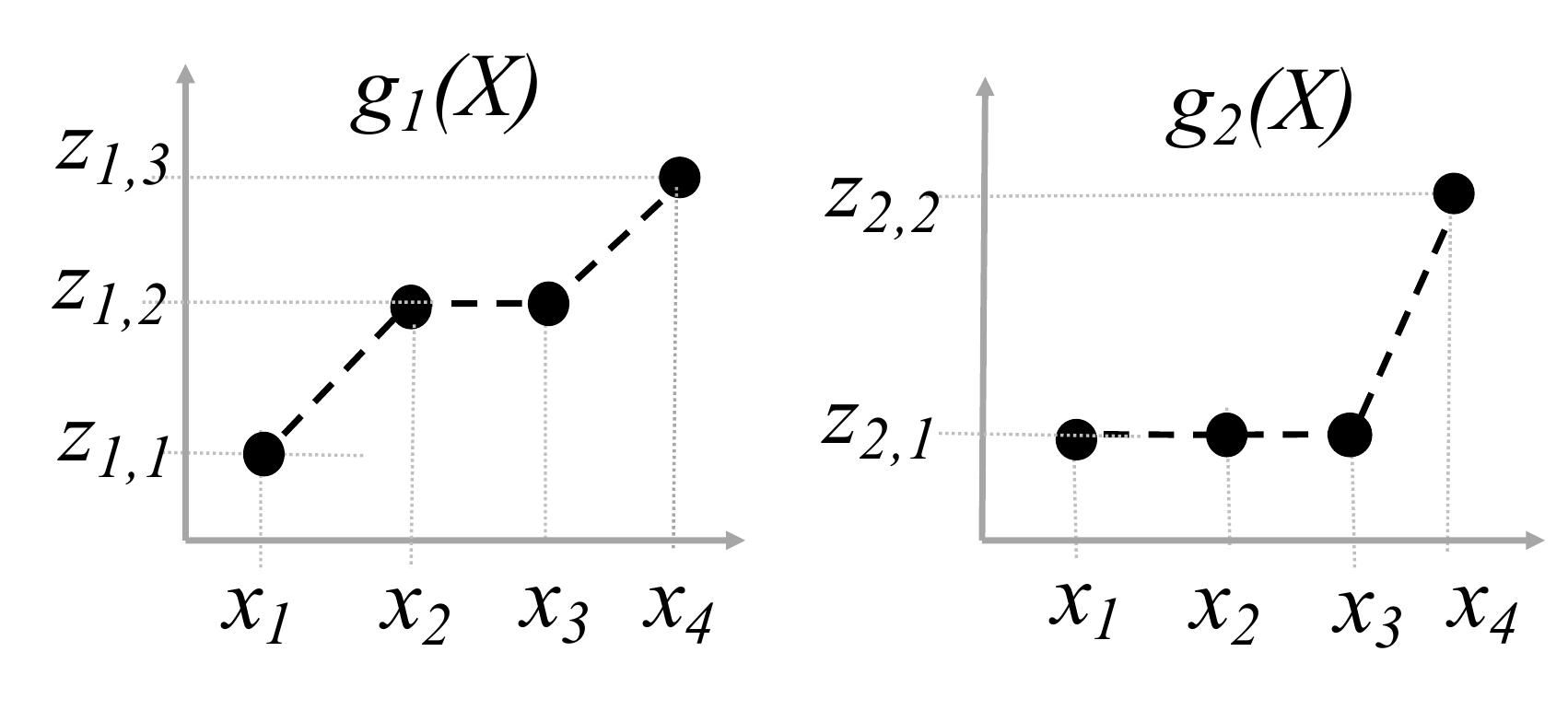}
        \vspace{-4mm}
\caption{\sl An example where it is not possible to get consistent estimation due to the ambiguity between $p_2$ and $p_3$.}    %\vspace{-4mm}
    \label{fig:NoRecovery}
\end{figure}

% However, if you consider the example shown in Figure \ref{fig:NoRecovery} we see that it is not possible to build a successful estimator as it is not possible to resolve the ambiguity between $x_2$ and $x_3$. This motivates us to look at a formal set of conditions under which it is possible to successfully estimate the true probability distribution. 

%justification

%\input{related_work}

\section{Structural Properties of the Functions $g_k(X)$}
\label{sec:resultsPrelem}

\subsection{Conditions for asymptotically consistent estimation}
\label{sec:conditions}

% Notice that in \Cref{exple:Recoverable}, it is possible to estimate $\{p_1, p_2, p_3\}$. We now present an example in which it is not possible to estimate the true probability distribution even when $t \rightarrow \infty$.  

% \GJ{We use the words 'step' and 'round' interchangeably for $t$. Need to pick one of these terminologies to maintain consistency in the paper}
% \SG{I agree, I personally would like to use round. Number of rounds sounds better to me than number of steps. I am not sure though..}
% \GJ{'Rounds' is fine with me}
% A natural question is whether it is possible to estimate the true distribution {\em consistently} (defined below) when the number $t$ of steps grows unboundedly large. 

% \GJ{Removed this sentence: The answer to this question depends on the functions $g_1, g_2, \ldots, g_K$ as we now show.}

\begin{defn}[Asymptotically consistent estimation]
Given a random variable $X$ and arms $\{g_1, g_2, \ldots, g_K\}$, the estimated probability distribution $\{ \estimateProb_1(t), \dots,\estimateProb_n(t)\}$ is said to be asymptotically consistent if $\lim_{t \to \infty} \varepsilon(t) = 0$.
%Given a random variable $X$ and arms $\{g_1, g_2, \ldots, g_K\}$, we call an estimation  asymptotically consistent if  $\lim_{t \to \infty} \varepsilon(t) = 0$.
\label{defn:consistent_estimation}
%$\sum_{i = 1}^{n} \E{(\tilde{pp}_i(t) - p_i)^2} \rightarrow 0$ as $t \rightarrow \infty$. 
\end{defn}

% \GJ{$o$ may be confused with the small $o$ notation. I suggest changing $o$ to something else. }
% \OY{May be we can use $z$ instead of $o$}

For each $k=1,\ldots, K$, let $\{z_{k,1}, z_{k,2}, \ldots, z_{k,m_k}\}$ denote the set of possible outcomes (i.e., the range of $g_k$) of the function $g_k$; evidently,  $m_k$ is the number of distinct outputs of $g_k$. The information about $g_k$ required to estimate $P_X$ can be captured in matrix  $A_k$ with $m_k$ rows and $n$ columns, where 
 \[
 A_k(i,j) = \left\{
 \begin{array}{ll}
    1,  &  \textrm{if $g_k(x_j) = z_{k,i}$}  \\
    0,  &   \textrm{otherwise},
 \end{array}
 \right.
 \]
 for each $i=1,\ldots, m_k$ and $j=1,\ldots, n$.
  Informally, $A_k(i,j) = 1$ if output $z_{k,i}$ could have been generated by $x_j$ in arm $k$. We refer $A_k$ as the \textit{Sample Generation Matrix} for arm $k$.
Let the matrix $A$ be given by $A = [A_1^\intercal, A_2^\intercal, \ldots, A_K^\intercal]^\intercal$; the size of $A$ is $m \times n$, where $m=m_1 + \ldots + m_K$. The corresponding matrices $A^{\textrm{Example-1}}$ and $A^{\textrm{Example-2}}$ for Examples \ref{exple:Recoverable} and \ref{exple:NonRecoverable}, respectively are shown below.
%\[
%A = 
%\begin{bmatrix}
%A_1 \\
%A_2 \\
%. \\
%.\\ 
%A_K
%\end{bmatrix}
%\]
%Matrix $A_{e1}$ and $A_{e2}$ for examples in \Cref{fig:Recovery} and \Cref{fig:NoRecovery} %respectively are shown below.  
\[ 
A^{\textrm{Example-1}}  = \left[ \begin{array}{ccc}
1 & 0 & 0 \\
0 & 1 & 1 \\
1 & 0 & 1 \\
0 & 1 & 0\\
1 & 1 & 0\\
0 & 0 & 1
\end{array} \right],
 ~~~ A^{\textrm{Example-2}}= \left[ \begin{array}{cccc}
1 & 0 & 0 & 0\\
0 & 1 & 1& 0 \\
0 & 0 & 0 & 1\\
1 & 1 & 1 & 0 \\
0 & 0 & 0 & 1
\end{array} \right]
\]

\begin{thm}
\label{thm:achievable}
It is possible to achieve asymptotically consistent estimation if and only if $\textrm{rank}(A) = n$.
\end{thm}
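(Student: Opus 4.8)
The plan is to reduce the whole statement to a question about the invertibility of the linear map $A$. The central observation is that the samples reveal exactly the vector $A P_X$ and nothing more. Indeed, when we pull arm $k$, the output $g_k(X_t)$ equals $z_{k,i}$ with probability $\sum_{j:\, g_k(x_j)=z_{k,i}} p_j = (A_k P_X)_i$, so the output distribution of arm $k$ is precisely $q_k := A_k P_X$, and stacking over all arms gives $q := A P_X = [q_1^\intercal,\ldots,q_K^\intercal]^\intercal$. Because each column of every $A_k$ is a standard basis vector (each $x_j$ has a single image under $g_k$), we have $\mathbf{1}^\intercal A_k = \mathbf{1}^\intercal$, so every $q_k$ is itself a probability vector and can be estimated to arbitrary accuracy by pulling arm $k$ sufficiently often.

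For sufficiency I would exhibit an explicit consistent estimator when $\textrm{rank}(A)=n$. Pull the arms in round-robin fashion so each is pulled $\Theta(t/K)$ times, and let $\hat q(t)$ collect the empirical output frequencies of all arms; standard multinomial concentration then gives $\E{\|\hat q(t)-q\|^2}=\OO(1/t)$. Since $\textrm{rank}(A)=n$, the matrix $A$ has full column rank and hence a left inverse; taking $A^{+}=(A^\intercal A)^{-1}A^\intercal$, I define $\tilde P_X(t)=A^{+}\hat q(t)$. Then $A^{+}q=A^{+}A P_X=P_X$, so $\E{\|\tilde P_X(t)-P_X\|^2}\le \|A^{+}\|_2^{2}\,\E{\|\hat q(t)-q\|^2}=\OO(1/t)\to 0$, which establishes $\varepsilon(t)\to 0$. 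If a bona fide distribution is required one may project $\tilde P_X(t)$ onto the simplex, which only decreases the error.

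For necessity I would argue the contrapositive: if $\textrm{rank}(A)<n$, consistent estimation is impossible. The null space of $A$ is then nontrivial, so I pick $0\neq v\in\ker(A)$; since $Av=0$ implies $A_1 v=0$, applying $\mathbf{1}^\intercal$ gives $\mathbf{1}^\intercal v=\mathbf{1}^\intercal A_1 v=0$, i.e.\ the entries of $v$ sum to zero. Because the true $P_X$ has strictly positive entries, for small enough $\delta>0$ the vector $P_X':=P_X+\delta v$ is also a valid probability distribution with $A P_X'=A P_X$, so $P_X$ and $P_X'$ induce identical output distributions on every arm. I would then note that, since the arm pulled at each step is a function of past observations only, the entire joint law of the observation sequence, and hence of any estimator $\tilde P_X(t)$, is identical under $P_X$ and $P_X'$. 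Writing $\varepsilon(t)$ and $\varepsilon'(t)$ for the errors against $P_X$ and $P_X'$ and using $\|a-P_X\|^2+\|a-P_X'\|^2\ge\tfrac12\|P_X-P_X'\|^2$ for every $a$, I conclude $\varepsilon(t)+\varepsilon'(t)\ge\tfrac12\delta^2\|v\|^2>0$ for all $t$, so no estimator can drive the error to zero at both distributions.

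I expect the necessity direction to be the main obstacle, specifically the claim that adaptive, data-dependent arm selection cannot help separate $P_X$ from $P_X'$. Making this rigorous requires an induction over time steps (or an explicit coupling) showing that the probability of every observation history factorizes identically under the two distributions, so that the adaptive policy produces the exact same law of pulls and outputs in both cases. The sufficiency direction is comparatively routine once the linear-algebraic reduction is in place: it amounts to applying a fixed full-rank pseudoinverse to concentrating empirical frequencies.
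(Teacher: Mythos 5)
Your proof is correct. On the sufficiency side it follows essentially the same route as the paper: pull arms round-robin, obtain consistent empirical estimates of the output probabilities $q_{k,i}$, and invert the linear system $AP_X=Q$; the paper invokes the strong law of large numbers and uniqueness of the solution, whereas your version is more quantitative, applying the pseudoinverse $A^{+}$ explicitly to get an $\OO(1/t)$ rate (which the paper defers to its separate order-wise achievability theorem). The genuine difference is in the necessity direction. The paper's argument is a single sentence: if $\textrm{rank}(A)<n$ the system $AP_X=Q$ has no unique solution, hence consistent estimation is impossible. Your argument actually substantiates this claim: you take a null vector $v$, note $\mathbf{1}^\intercal v=\mathbf{1}^\intercal A_1 v=0$, perturb to a second valid distribution $P_X'=P_X+\delta v$ (which requires the paper's standing assumption $p_i>0$), observe that $P_X$ and $P_X'$ induce the identical law on the entire observation sequence even under adaptive arm selection, and conclude via the two-point inequality $\|a-P_X\|^2+\|a-P_X'\|^2\ge\tfrac12\|P_X-P_X'\|^2$ that the errors cannot both vanish. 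This closes a gap the paper leaves implicit, namely that the observations carry no information about $P_X$ beyond $AP_X$, so that no estimator --- linear or otherwise, adaptive or not --- can distinguish the two distributions. The step you flag as delicate, the factorization of the history law under a data-dependent policy, is the right thing to verify, and the induction you sketch goes through because the conditional distribution of each new observation given the history depends on $P_X$ only through $AP_X$.
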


\begin{proof}[Proof of \Cref{thm:achievable}]
Recall that $z_{k,i}$ represents the $i^{th}$ distinct output of arm $k$. Let $q_{k,i}$  denote the probability of observing $z_{k,i}$ each time arm $k$ is pulled. Consider the system of linear equations below relating these probabilities to the probability distribution of $X$:
\begin{equation}
q_{k,i} \triangleq \sum_{j = 1}^{ n} A_k(i,j)p_j, \qquad \begin{array}{c}
     k=1,\ldots, K  \\
     i=1,\ldots, m_k 
\end{array}  
\label{rankEqn}
\end{equation}   
% \GJ{Use the indices $i$, $z$ etc consistently. Earlier $i$ was used for the input probabilities, and now it is used to index the outcomes.}
% \OY{Also, $z$ is not typically used as a summation index. How about using $\ell$}
% \SG{Sure. Will make the changes}
These set of equations can be written as $$A P_X = Q,$$ with $Q$ denoting the vector $[{q}_{1,1}, \ldots, {q}_{1,m_1}, \ldots {q}_{K,m_K}]^\intercal$.

Suppose now that $A$ is full rank. In order to  construct an asymptotically consistent estimate of $P_X = [p_1, \ldots, p_n]^\intercal$, assume that arms are pulled in a round-robin manner. Thus, at step $t$ we will have $\frac{t}{K}$ samples from each arm. With $t_{k,i}$ denoting the number of times $z_{k,i}$ is observed in $t$ steps, we let $\hat{q}_{k,i}(t) = \frac{t_{k,i}}{{t}/{K}}$
be the estimate of $q_{k,i}$ at step $t$. 
By virtue of Strong Law of Large Numbers, we have $\hat{q}_{k,i}(t) \to q_{k,i}$ almost surely as $t$ goes to infinity, that is,  the estimates $\hat{q}_{k,i}(t)$ are asymptotically consistent. 
 Given that $A$ is full rank, the estimates $\hat{q}_{k,i}(t)$  can be used to obtain a unique solution of $P_X = [p_1, p_2, \ldots, p_n]^\intercal$ 
from the system of equations \Cref{rankEqn}. Given that $n$ is finite, this unique solution will constitute an asymptotically consistent estimation of $P_X$ as well.

Conversely, if $\textrm{rank}(A) < n$,  it is not possible to obtain a unique solution of the system of equations in \Cref{rankEqn}. This implies that even if consistent estimation of each $q_{k,i}$ is possible, it is not possible to achieve asymptotically consistent estimation of the probability distribution, $P_X$.
\end{proof}

%The proof is given in the extended version \cite{gupta2018learning} of the paper.

% If $t_1, t_2, \ldots, t_K$ are  the number of times arms $1, 2 \ldots, K$ are pulled, then by law of large numbers, if $t_k \rightarrow \infty$ then $\frac{n_{o_{i,k}}}{t_k} \rightarrow q_{o_{i,k}}$ almost surely, where, $n_{o_{i,k}}$ is the number of times $i^{th}$ output from arm $k$, i.e $o_{i,k}$ is observed. 
% Consider $\frac{n_{o_k}}{t_k}$ as $\left[\frac{n_{o_{1,k}}}{t_k} \frac{n_{o_{2,k}}}{t_k} \cdots \frac{n_{o_{m_k, k}}}{t_k}\right]$. Let $Y$ be the vector of estimated probability distribution, i.e $\left[\hat{p}_1, \hat{p}_2, \ldots \hat{p}_n\right]^{\intercal}$. 
% Observe that the problem of estimating probability distribution after $t$ actions (where $t \rightarrow \infty$) is equivalent to solving the linear system of equations given by $$AY = P_o,$$ where $P_o$ is defined as $\left[\frac{n_{o_1}}{t_1} \frac{n_{o_2}}{t_2} \cdots \frac{n_{o_K}}{t_K}\right]^{\intercal}$. 
% %Therefore, if $rank(A) = n$, then it is possible to solve the linear system of equations and hence successfully estimate the true probability distribution. If $rank(A) < n$ and it is possible to achieve successful estimates, it means the linear system of equations is solved, which is not possible (as $rank(A) < n$). 
% Since the problem reduces to solving a linear system of equations, a unique solution exists if and only if $A$ is full rank. Therefore it is possible to achieve asymptotically consistent estimation if and only if $rank(A) = n$. 

Clearly, $\textrm{rank}(A^{\textrm{Example-1}})= n$ while $\textrm{rank}(A^{\textrm{Example-2}}) < n$. Thus,
asymptotically consistent estimation is possible for the set of functions in \Cref{exple:Recoverable} but not in \Cref{exple:NonRecoverable}.

\begin{rem}
%Note that 
  \Cref{thm:achievable} 
is not constrained to the Definitions \ref{defn:error_metric} and \ref{defn:consistent_estimation} of error and asymptotically consistent estimation, respectively. 
 In fact, the condition $\textrm{rank}(A) = n$ is   necessary and sufficient to have (the possibility of achieving) for any $\epsilon > 0$ and $\forall i$ that  $\mid \estimateProb_i(t) - p_i \mid < \epsilon$ for all $t$ sufficiently large.
\end{rem}

\subsection{Redundant functions/arms}

%Recall that our goal is to find estimates $\{\hat{p}_1(t), \hat{p}_2(t), \dots \hat{p}_n(t)\}$ based on observations till $t$ rounds such that $\E{\sum_{i = 1}^{n}(\hat{p}_i(t) - p_i)^2}$ is minimized. The problem of estimation has two key components, deciding what arms to pull and generating estimates based on observed data. 

Recall the definition of sample generation matrix $A_k$ for each arm $k$ given in \Cref{sec:conditions}. %defined a 

% \begin{defn}
% An arm $a$ is said to be a \textit{subset} of another arm $b$ if all rows in $A_a$ can be generated by linear combination of rows in matrix $A_b$. 
% \end{defn}

\begin{defn}[Redundant Arm]
\label{def:subsetarm}
An arm $r$ is said to be a \emph{redundant} if there exists another arm $s$ such that the row space of $A_r$ is a strict subset of the row space of $A_s$.
\end{defn}

Informally, this means that all information produced by arm $r$ can be generated by arm $s$. For example, in \Cref{fig:NoRecovery} we see that arm 2 generates information about $p_1 + p_2 + p_3$, while arm 1 generates information about $p_1$ and $p_2 + p_3$ separately; also, both arms generate information about $p_4$ separately. Therefore, information produced by arm 2 can be generated by arm 1. This observation is made precise next.

\begin{thm}
\label{thm:subset}
If an arm $r$ is redundant, then it is suboptimal to pull arm $r$ at any step $t$ for the purpose of minimizing $\varepsilon(t)$.
\end{thm}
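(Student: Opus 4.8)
The plan is to convert the algebraic redundancy condition of \Cref{def:subsetarm} into a concrete statement about how the two arms partition the alphabet, and then exploit this to show that every pull of arm $r$ can be reproduced by a pull of arm $s$ with no loss. Since each column of a sample generation matrix contains exactly one $1$, each $A_k$ encodes the partition $\mathcal{P}_k$ of $\{x_1,\ldots,x_n\}$ whose blocks are the preimages $g_k^{-1}(z_{k,i})$, and the rows of $A_k$ are exactly the indicator vectors of these blocks. Hence the row space of $A_k$ is precisely the set of vectors that are constant on each block of $\mathcal{P}_k$. First I would prove the elementary equivalence that the row space of $A_r$ is contained in the row space of $A_s$ if and only if $\mathcal{P}_s$ refines $\mathcal{P}_r$ (each block of $\mathcal{P}_r$ is a union of blocks of $\mathcal{P}_s$), with the containment being strict exactly when the refinement is strict. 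Equivalently, there is a $0$--$1$ lumping matrix $M$ with $A_r = M A_s$, and a deterministic surjection $h$ satisfying $g_r = h \circ g_s$.

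The heart of the argument is then a coupling/simulation step. Fix any (possibly adaptive) strategy $\sigma$ that pulls arm $r$ at one or more steps, and define a strategy $\sigma'$ that is identical to $\sigma$ except that whenever $\sigma$ would pull arm $r$ on the realization $X_t$, $\sigma'$ instead pulls arm $s$, observes $g_s(X_t)$, and feeds the value $h(g_s(X_t))$ into the remainder of $\sigma$'s logic. Because $g_r = h \circ g_s$, under the natural coupling that uses the same underlying $X_t$ for both strategies we have $h(g_s(X_t)) = g_r(X_t)$ with probability one. Thus $\sigma'$ observes an identically distributed sequence and produces the identical sequence of estimates $\tilde{P}_X(t)$, so $\varepsilon_{\sigma'}(t) = \varepsilon_{\sigma}(t)$ for every $t$ while $\sigma'$ never pulls arm $r$. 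This already shows that pulling $s$ weakly dominates pulling $r$, so no optimal strategy is ever \emph{forced} to use arm $r$.

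To upgrade weak domination to strict suboptimality I would use that the refinement is strict: there exist symbols $x_a, x_b$ lying in a common block of $\mathcal{P}_r$ but in distinct blocks of $\mathcal{P}_s$. The simulating strategy $\sigma'$ discards exactly the information distinguishing such symbols, since it retains only $h(g_s(X_t))$. I would then exhibit an estimator that instead uses the full observation $g_s(X_t)$ and show it attains strictly smaller mean-squared error, e.g. by comparing the conditional variances of the plug-in estimators of the block probabilities that $s$ resolves but $r$ merges: resolving a merged block into finer counts can only shrink the variance of the affected coordinate estimates, and does so strictly whenever the merged block carries positive probability, which is guaranteed by the standing assumption $p_i > 0$. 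Chaining the three steps shows that any strategy using arm $r$ is dominated, and strictly so, by one that substitutes arm $s$, which is the claim.

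The step I expect to be the main obstacle is the strict improvement in the third paragraph. The weak-domination coupling is clean and holds for every distribution and every adaptive history, but converting ``strictly more information'' into a strictly smaller value of the specific loss $\varepsilon(t)$ requires either an explicit estimator comparison or a Fisher-information/convexity argument, and some care is needed because optimality of the estimate is measured against the full adaptive history rather than at a single isolated pull.
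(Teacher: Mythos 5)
Your first two paragraphs reproduce the paper's argument exactly: the row-space containment is unpacked into the statement that $g_r = h \circ g_s$ for a deterministic map $h$ (i.e., the partition induced by $g_s$ refines the one induced by $g_r$), so every sample from arm $s$ determines the sample arm $r$ would have produced on the same realization $X_t$, and the simulation/coupling shows that substituting $s$ for $r$ reproduces the same estimates and hence the same $\varepsilon(t)$. Your partition-refinement lemma is in fact a more careful justification of a step the paper simply asserts ("each $x \in \mathcal{X}_{s,i}$ will be mapped to the same observation $g_r(x)$"). The strict-improvement step you flag as the main obstacle is not needed to match the paper: its proof stops at weak domination ("pulling the arm $s$ at each time step will be at least as good"), reading "suboptimal" as "never strictly better than the alternative," so your third paragraph is an attempted strengthening beyond the paper's claim --- and, as you note, it is genuinely delicate, since turning "strictly more Fisher information" into a strictly smaller achievable mean-squared error at a fixed finite $t$ is not established there.
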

\begin{proof}[Proof of \Cref{thm:subset}]
% For the purpose of estimating the random variable $X$, each arm $k$ is characterized by the matrix $A_k$. If two arms $i$ and $j$ have $A_i = A_j$, then the two arms are identical for the purposes of estimating $X$. Since all the rows of $A_r$ can be generated by linear combination of rows in matrix $A_s$, arm $s$ can act like arm $r$. Therefore, pulling arm $s$ is atleast as good as pulling arm $r$. Hence, pulling arm $r$ is suboptimal. 
Since arm $r$ is redundant, it implies that there exists an arm $s$ such that the row space of $A_r$ is a strict subset of the row space of $A_s$. Suppose we are given a set $\mathcal{Z}_s$ of the samples from arm $s$. For each observation $z_{s,i} \in \mathcal{Z}_s$, consider the set $\mathcal{X}_{s,i} = \{x: g_s(x) = z_{s,i}\}$. Since the row space of $A_r$ is a subset of row space of $A_s$, each  $x \in \mathcal{X}_{s,i}$ will be mapped to the same observation $g_r(x)$ in arm $r$. % the same observation $z_{r,i}$ from arm $r$. 
More formally, we have $g_{r}(x) = z_{r,i}$ $\forall{x \in \mathcal{X}_{s,i}}$. Repeating this for all $z_{s,i} \in \mathcal{Z}_s$, we can construct a new sample set $\mathcal{Z}_r$. For the underlying set of realizations $\mathcal{X}=\{X_1, X_2, \ldots\}$ that lead to the samples $\mathcal{Z}_s$, we can see that $\mathcal{Z}_r$ is the exact sample set that one would have obtained if arm $r$ was pulled each time instead of arm $s$. This shows that if arm $r$ is redundant, then each time the arm $s$ is pulled, we automatically know the sample that we would have obtained from arm $r$, thereby obviating the need to ever pull arm $r$. Thus, for the purposes of minimizing $\varepsilon(t)$, it is suboptimal to ever pull a redundant arm $r$; pulling the arm $s$ at each time step will be at least as good. 
 \end{proof}

 By \Cref{thm:subset}, if an \textit{invertible} arm exists, all other arms will be redundant. This leads to the following corollary.

\begin{coro}
\label{alwaysInvertible}
If there is an invertible arm, then the optimal action
(for the purpose of minimizing $\varepsilon(t)$)
is to pull the invertible arm at every step.   
\end{coro}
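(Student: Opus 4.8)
The plan is to reduce the corollary entirely to Theorem \ref{thm:subset} by showing that, whenever an invertible arm is present, every non-invertible arm is redundant in the sense of Definition \ref{def:subsetarm}. Once that is established, Theorem \ref{thm:subset} immediately rules out every non-invertible arm as suboptimal, leaving only invertible arms as candidates for the optimal action.

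First I would pin down the sample generation matrix of an invertible arm. If $g_s$ is invertible it is injective on $\{x_1, \ldots, x_n\}$, so it has exactly $n$ distinct outputs and each input maps to its own output. Hence $A_s$ is an $n \times n$ matrix whose rows are the standard basis vectors $e_1, \ldots, e_n$ in some order, and its row space is all of $\mathbb{R}^n$. Next I would compute the row space of an arbitrary non-invertible arm $r$. The rows of $A_r$ are the indicator vectors of the preimage sets $\{j : g_r(x_j) = z_{r,i}\}$, which partition $\{1, \ldots, n\}$; since these sets are disjoint, the rows have pairwise disjoint supports and are therefore linearly independent, giving $\textrm{rank}(A_r) = m_r$. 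Because $g_r$ is not injective it has fewer than $n$ distinct outputs, so $m_r < n$ and the row space of $A_r$ is a proper, hence strict, subspace of $\mathbb{R}^n = $ row space of $A_s$. This is exactly the condition in Definition \ref{def:subsetarm}, so arm $r$ is redundant and Theorem \ref{thm:subset} shows that pulling it is suboptimal.

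Finally I would assemble the conclusion: every arm other than an invertible one is redundant with respect to the invertible arm, so the only arms not excluded by Theorem \ref{thm:subset} are invertible ones, and pulling the invertible arm at each step minimizes $\varepsilon(t)$. The single point that requires care, and which I expect to be the only real obstacle, is the case of several invertible arms: two invertible arms both have row space $\mathbb{R}^n$, so neither row space is a strict subset of the other and the redundancy definition does not discriminate between them. This is not a genuine gap, since any invertible arm yields a direct, unambiguous observation of the realized value of $X$ and all such arms are therefore interchangeable; pulling any one of them at every step attains the optimum. The linear-algebra core is routine, so it is this bookkeeping over the possibly non-unique invertible arm that I would treat most carefully.
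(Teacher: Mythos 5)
Your proposal is correct and follows the same route as the paper, which justifies the corollary with the single observation that when an invertible arm exists, every other arm is redundant, and then invokes \Cref{thm:subset}. You simply fill in the linear-algebra details (the invertible arm's row space is all of $\mathbb{R}^n$ while a non-invertible arm's row space has dimension $m_r < n$) and flag the multiple-invertible-arms edge case, both of which the paper leaves implicit.
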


\begin{rem}
The proofs of \Cref{thm:subset} and \Cref{alwaysInvertible} are not specific to the error metric given in \Cref{defn:error_metric}. Thus, both results hold  true under other error metrics as well; e.g., L1 norm, KL divergence etc.
\end{rem}

\section{Bounds on the Estimation Error}
\label{sec:ResultsBounds}
\subsection{Lower bounds on the estimation error}
\label{sec:bounds}

%\subsection{A naive lower bound on the variance of estimation of error}
%\label{sec:naivebound}

We first derive a {\em crude} lower bound on the estimation error which does not depend on the functions $\{g_1, g_2, \ldots g_K\}.$ 
% \GJ{Say in a few words why this is a naive bound. Otherwise readers can interpret the word 'naive' in arbitrary ways} 
% \SG{Should we explain the idea that if there were an invertible arm, one would just pull it and the error corresponding to that gives us the naive lower bound which we state in theorem 3...}

\begin{thm}[Crude Lower Bound]
\label{thm:universalBound}
Estimation error of any unbiased estimator for the problem in \Cref{sec:prob_formu} is lower bounded by $\sum_{j=1}^{n} \frac{p_j(1 - p_j)}{t}$.
\end{thm}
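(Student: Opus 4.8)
The plan is to lower-bound the error of the indirect problem by that of a \emph{genie-aided} setting in which the learner is handed the realizations $X_1,\dots,X_t$ directly, and then to invoke the classical minimum-variance bound for estimating a multinomial distribution from direct i.i.d.\ samples. The intuition is exactly the one behind \Cref{alwaysInvertible}: direct access to $X$ (equivalently, an invertible arm) is the most informative situation possible, so a bound that ignores this extra information will not reference the particular functions $g_k$ at all, matching the ``crude'' nature of the claim.

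First I would formalize the reduction by a simulation argument. Fix any (possibly adaptive, possibly randomized) sampling policy together with any unbiased estimator $\tilde{P}_X(t)$ for the indirect problem. Given the direct realizations $X_1,\dots,X_t$, the genie can compute $g_k(X_\tau)$ for every arm $k$ and every step $\tau$; hence it can replay the policy verbatim---selecting arm $c_\tau$ from the simulated past observations, generating the sample $g_{c_\tau}(X_\tau)$, and finally evaluating $\tilde{P}_X(t)$. The resulting estimator is, as a random variable, identical in distribution to the original one, so it has the same bias and the same per-coordinate variance. Consequently every unbiased estimator available in the indirect problem is realizable as an unbiased estimator in the direct problem with identical error, and it suffices to prove the bound $\sum_{j}\frac{p_j(1-p_j)}{t}$ for the direct problem.

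For the direct problem, the count vector $(N_1,\dots,N_n)$ with $N_j=\sum_{\tau=1}^t \mathbbm{1}[X_\tau=x_j]$ follows a multinomial distribution and is a complete sufficient statistic for $P_X$. The empirical frequency $\hat p_j=N_j/t$ is unbiased for $p_j$ with $\V{\hat p_j}=\frac{p_j(1-p_j)}{t}$, and is a function of this complete sufficient statistic; by the Lehmann--Scheff\'{e} theorem (equivalently, by the Cram\'{e}r--Rao bound applied to the multinomial family on the simplex, whose inverse Fisher information has diagonal entries $\frac{p_j(1-p_j)}{t}$) it is the minimum-variance unbiased estimator of $p_j$. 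Therefore any unbiased $\estimateProb_j(t)$ satisfies $\V{\estimateProb_j(t)}\ge \frac{p_j(1-p_j)}{t}$. Summing over $j$ and using that the estimator is unbiased (so $\varepsilon(t)$ equals the sum of the coordinate variances) yields $\varepsilon(t)=\sum_{j=1}^n \V{\estimateProb_j(t)} \ge \sum_{j=1}^n \frac{p_j(1-p_j)}{t}$, as claimed.

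The step I expect to be the main obstacle is the direct-case lower bound, because the naive reduction---looking only at the indicators $\mathbbm{1}[X_\tau=x_j]$ and bounding each Bernoulli mean separately---does not by itself rule out an estimator of $p_j$ that exploits the cross-category counts $N_i$ ($i\neq j$), which do carry information about $p_j$ through the conditional law on the remaining symbols. Handling this correctly requires the joint argument above (completeness of the full count vector, or the multi-parameter Cram\'{e}r--Rao bound on the constrained simplex) rather than $n$ isolated scalar bounds; the genie reduction is conceptually the key move but becomes routine once the simulation is spelled out.
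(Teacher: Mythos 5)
Your proof is correct and takes essentially the same route as the paper's: reduce to the case where $X$ is observed directly (the paper phrases this as positing an invertible arm, which by \Cref{alwaysInvertible} and the fact that extra arms can only help yields a valid lower bound), then invoke the minimum-variance property of the empirical multinomial frequencies. You make explicit two steps the paper leaves implicit---the simulation argument showing any indirect policy can be replayed from direct samples, and the Lehmann--Scheff\'{e}/completeness justification that the empirical estimator is UMVUE---but the underlying argument is the same.
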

\begin{proof}[Proof of \Cref{thm:universalBound}]
% By \Cref{alwaysInvertible}, it is clear that if there exists an invertible arm, it is optimal to always pull that arm. 
% Since the arm is invertible, one can construct an empirical estimator which is also the maximum likelihood estimator. Let $\hat{p}_i(t)$ be given by $\frac{n_{x_i}}{t}$, where $n_{x_i}$ is the number of times output corresponding to $x_i$ was observed till $t$ rounds. Estimation error under this scenario, $\E{\sum_{i =1}^{ n}(\hat{p}_i(t) - p_i)^2}$ is equal to $\sum_{i=1}^{n}\V{\hat{p}_i(t)}$. For the empirical estimation on invertible arm, $$\sum_{i=1}^{ n}\V{\hat{p}_i(t)} = \sum_{i=1}^{n} \frac{p_i(1 - p_i)}{t}.$$

From \Cref{alwaysInvertible}, we know that it is optimal to always pull the invertible arm if there exists one. It is also clear that the optimal error can only decrease when an additional arm is included in the set of possible arms we can choose. Thus, for the purpose of deriving a lower bound on the estimation error, we can assume the existence of an invertible arm which is pulled in all $t$ steps.

% \GJ{Notation issue below. $n$ is used for the number of possible values of $X$. Now it is being used as $n_{x_i}$, the number of times an output corresponding to $x_i$ was observed}
% \SG{Yes, we should change one of them. We can discuss this. Denoting the number of times an output was observed as $n_o$ looks better to me.}

We define 
$\hat{p}_j(t) = \frac{t_{x_j}}{t},    
$
as the corresponding empirical estimator (which is also the  maximum likelihood estimator), where $t_{x_j}$ is the number of times the output corresponding to $x_j$ was observed (from the invertible arm) in $t$ steps. Under this scenario, 
the estimation error is given by
%$\E{\sum_{i =1}^{ n}(\hat{p}_i(t) - p_i)^2}$ is equal to $\sum_{i=1}^{n}\V{\hat{p}_i(t)}$. For the empirical estimation on invertible arm, we have
\begin{equation}
\varepsilon(t)=\sum_{j=1}^{ n}\V{\hat{p}_j(t)} = \sum_{j=1}^{n} \frac{p_j(1 - p_j)}{t},
\end{equation}
% \GJ{The variance expression $p_i(1-p_i)/t$ may not be obvious to some people. One way to clarify is to say that $n_{x_i}$ is a Binomial random variable, which has variance $t p_i (1-p_i)$ }
% \SG{Sure. Will make this change.}
% This is also the minimum possible variance for any unbiased estimator given observations from samples of invertible arm. Using this fact and \Cref{alwaysInvertible}, we arrive at the following lower bound on estimation of the probability distribution. 
as $t_{x_j}$ is a Binomial random variable, which has variance $t p_j (1-p_j)$. This also gives the minimum possible variance for any unbiased estimator (given the samples from the invertible arm). Using this fact and \Cref{alwaysInvertible}, we establish \Cref{thm:universalBound}.

\end{proof}
%The proof is in the extended version of this paper  \cite{gupta2018learning}.

% \begin{thm}
% \label{universalBound}
% Estimation error of any unbiased estimator for the problem in \Cref{sec:prob_formu} is lower bounded by $\sum_{i=1}^{n} \frac{p_i(1 - p_i)}{t}$.
% \end{thm}

\begin{rem}
The lower bound in \Cref{thm:universalBound} is achieved if an invertible arm exists.
\end{rem}

The final estimation error after a total of $t$ steps depends on the number of times each arm is pulled till step $t$. Due to the sequential nature of the problem, we have control over which arm is pulled at each time step and hence on the number of times each arm $k$ is pulled till step $t$, i.e., $t_k$. Next, we derive a lower bound on the error of any unbiased estimator given the number of times each arm is pulled.  
% \SG{Due to the sequential nature of the problem, one has control over the number of times each arm $i$ is pulled, i.e $t_i$. Given how many times each arm is pulled, we derive a lower bound. As the final estimation error depends only on the number of times each arm is pulled.}

% \SG{Should we write error in terms of $\mathbf{t}$ to emphasize that it is a function of pulls on each arm. Moreover, we should write $I(\theta, \mathbf{t})$ as fisher information is a function of both $\theta$ and $\mathbf{t}$}

% Next, we derive a lower bound on the  error of any unbiased estimator given the number of times each arm is pulled.
\begin{thm}[Lower bound on error for a given number of pulls]
Let $\mathbf{t} =  [t_1, t_2, \ldots, t_K]^\intercal$ be the number of times arms $\{g_1, \ldots, g_K\}$ are pulled, respectively. The estimation error of any unbiased estimator satisfies
\label{thm:CRLB}
\begin{align}
  \varepsilon(\mathbf{t}) \geq tr(I(\theta, \mathbf{t})^{-1}) + \sum_{i=1}^{n-1} \sum_{j=1}^{n-1} I(\theta, \mathbf{t})^{-1}(i,j), 
\end{align}
% \GJ{Notation issue. The estimation error was defined as $\epsilon(t)$, not $\epsilon(\mathbf{t})$. Before the above equation say that $t_1+t_2+ \dots + t_K = t$, and use $\epsilon(t)$.}
% \SG{I think we should use $\epsilon(\mathbf{t})$ as error depends on allocation and not on just total number of pulls}
% \OY{I agree with Samarth that a bold-face is needed. However, may be we should then use a different letter than $t$. When we do not have the information on the specific allocation or do not care, we use $\epsilon(t)$ to mean the error after exactly $t$ rounds. How about using
% $\epsilon(\mathbf{n}_t)$ for the error obtained after $t$ pulls with specific allocation of pulls to arms encoded in $\mathbf{n}$. Actually, the statement of the theorem defines $\mathbf{t}$ very clearly, so I think we can keep this part as is.}
% \GJ{Oh yes, right. I remember now that we had this same discussion before the ISIT deadline..}

where $I(\theta, \mathbf{t})$ is the  $n-1 \times n-1$ Fisher-Information %Fisher-Information 
matrix with entries %$(i,j)^{th}$ entry given by 
\begin{align}\nonumber
I_{i,j}(\theta,\mathbf{t})  &= \sum_{k=1}^{K}\sum_{\ell = 1}^{m_K} \frac{t_k   A_k(\ell,i)  A_k(\ell,j)  (1 - A_k(\ell,n))}{q_{k,\ell}} + \\
&~ \frac{t_k  (1-A_k(\ell,i))  (1 - A_k(\ell,j)  A_k(\ell,n)}{q_{k,\ell}}.
\label{eq:fisher_info_matrix}
\end{align}
% \GJ{The summation over $l$ should go from $1$ to $m_k$, right?}
% \SG{No. It should be $\sum_{\ell = \sum_{s = 1}^{k-1} m_s}^{\sum_{s = 1}^{k} m_k}$. It looks messy though..}
% \GJ{Yes, this is really messy. Need to define better notation..}
%\color{red} Formula doesn't look very  reader friendly.. but basically the numerator inside the summation is just either $-t_k$ or $0$, it is $-t_k$, if $[A_k(l,i) = 1, A_k(l,j) = 1, A_k(l,n) = 0]$ or $[A_k(l,i) = 0, A_k(l,j) = 0, A_k(l,n) = 1]$, numerator is zero otherwise... let me know if you have any ideas on how to write it nicely..  \color{black} 
% \begin{equation}
% I_{i,j}(\theta)  = \color{red}\sum_{k=1}^{K}\sum_{l = 1}^{m_K} \frac{-t_k \times  (A_k(l,i) \times A_k(l,j) \times (1 - A_k(l,n)) + (1-A_k(l,i)) \times (1 - A_k(l,j)) \times A_k(l,n))}{\sum_{r = 1}^{n} A_k(l,r) \times p_r}\color{black}    
% \end{equation}
\end{thm}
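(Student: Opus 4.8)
The plan is to recognize this statement as a multivariate Cramér–Rao lower bound (CRLB). Because the $n$ probabilities are tied by the constraint $\sum_{j=1}^{n} p_j = 1$, the natural parameter vector is $\theta = (p_1, \ldots, p_{n-1})$, with $p_n = 1 - \sum_{i=1}^{n-1} p_i$ treated as a dependent coordinate. This is precisely why the information matrix in the statement has dimension $(n-1)\times(n-1)$: working with all $n$ coordinates would yield a singular (non-invertible) information matrix, so the reparametrization is essential.

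First I would compute the Fisher information contributed by a single pull of arm $k$. One such observation is a categorical random variable taking value $z_{k,\ell}$ with probability $q_{k,\ell} = \sum_j A_k(\ell,j)\,p_j$. Substituting $p_n = 1 - \sum_{i<n} p_i$ gives $\partial q_{k,\ell}/\partial p_i = A_k(\ell,i) - A_k(\ell,n)$, so the score for parameter $p_i$ is $(A_k(\ell,i) - A_k(\ell,n))/q_{k,\ell}$, and the per-pull information entry is $\sum_{\ell} (A_k(\ell,i)-A_k(\ell,n))(A_k(\ell,j)-A_k(\ell,n))/q_{k,\ell}$. Since every $A_k(\ell,\cdot)$ entry lies in $\{0,1\}$, a case split on $A_k(\ell,n)\in\{0,1\}$ rewrites the numerator as $A_k(\ell,i)A_k(\ell,j)(1-A_k(\ell,n)) + (1-A_k(\ell,i))(1-A_k(\ell,j))A_k(\ell,n)$, which is exactly the summand in \Cref{eq:fisher_info_matrix}. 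Because the $t_k$ pulls of arm $k$ are i.i.d.\ and observations across arms are independent, information is additive, giving $I(\theta,\mathbf{t}) = \sum_{k} t_k I^{(k)}$.

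Next I would invoke the multivariate CRLB: for any unbiased estimator $\hat{\theta} = (\estimateProb_1, \ldots, \estimateProb_{n-1})$, the covariance matrix satisfies $\mathrm{Cov}(\hat{\theta}) \succeq I(\theta,\mathbf{t})^{-1}$ in the positive-semidefinite ordering. I then recast the error metric in these terms. For an unbiased estimator, $\E{\estimateProb_j}=p_j$, so $\varepsilon(\mathbf{t}) = \sum_{j=1}^{n} \V{\estimateProb_j}$. The first $n-1$ terms sum to $\mathrm{tr}(\mathrm{Cov}(\hat{\theta}))$. The final term is governed by the constraint: since $\estimateProb_n = 1 - \sum_{i<n}\estimateProb_i$, its variance equals $\sum_{i=1}^{n-1}\sum_{j=1}^{n-1}\mathrm{Cov}(\hat{\theta})(i,j) = \mathbf{1}^\intercal \mathrm{Cov}(\hat{\theta})\,\mathbf{1}$, the sum of all entries of the covariance matrix.

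Finally, I would push the inequality $\mathrm{Cov}(\hat{\theta}) - I(\theta,\mathbf{t})^{-1} \succeq 0$ through both functionals. A positive-semidefinite matrix has nonnegative diagonal entries, so $\mathrm{tr}(\mathrm{Cov}(\hat{\theta})) \geq \mathrm{tr}(I(\theta,\mathbf{t})^{-1})$; and evaluating the nonnegative quadratic form at the all-ones vector gives $\mathbf{1}^\intercal \mathrm{Cov}(\hat{\theta})\,\mathbf{1} \geq \mathbf{1}^\intercal I(\theta,\mathbf{t})^{-1}\mathbf{1} = \sum_{i=1}^{n-1}\sum_{j=1}^{n-1} I(\theta,\mathbf{t})^{-1}(i,j)$. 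Summing these two bounds yields the claim. The main obstacle is bookkeeping rather than depth: one must correctly handle the dependent coordinate $p_n$, reparametrizing so the information matrix stays nonsingular and then recovering $\V{\estimateProb_n}$ as the full sum of covariance entries, which is exactly what produces the extra double-sum term beyond the trace in the stated bound.
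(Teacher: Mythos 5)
Your proof is correct and takes essentially the same route as the paper: reparametrize to $\theta = (p_1,\ldots,p_{n-1})$ so the information matrix is nonsingular, compute the Fisher information of the categorical observations (additive over the $t_k$ independent pulls), apply the multivariate Cram\'er--Rao bound $\mathrm{Cov}(\hat{\theta}) \succeq I(\theta,\mathbf{t})^{-1}$, and decompose the error as the trace plus the variance of the dependent coordinate $\estimateProb_n = 1 - \sum_{i<n}\estimateProb_i$, which becomes the all-ones quadratic form. You actually supply more detail than the paper does, in particular the explicit score computation and the $\{0,1\}$ case split on $A_k(\ell,n)$ that recovers the stated entries of the Fisher information matrix.
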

%$$F(i,j) = \sum_{z = 1}^{z = m} \frac{-n_{arm(z)} \times (A(z,i) \times A(z,j) \times (1 - A(z,n)) + (1-A(z,i)) \times (1 - A(z,j)) \times A(z,n))}{\sum_{k = 1}^{n} A(z,k) \times p_k}$$
\begin{proof}
We use the Cramer-Rao bound \cite{rao1945information, cramer1946mathematical} that provides a lower bound on the covariance matrix of any unbiased estimator of an unknown deterministic parameter. Since $\sum_{i = 1}^{n} p_i = 1$ it suffices to estimate any $n-1$ of the parameters $\{p_1, p_2, \ldots, p_n \}$. Let these parameters ($\theta = \{\theta_1, \theta_2, \ldots, \theta_{n-1} \}$) be $\{p_1, p_2, \ldots, p_{n-1}\}$. Let $\mathcal{D}_t$ be the event that after $t$ steps, output $z_{k,i}$ from arm $k$ is observed $t_{k,i}$ times, for all $k \in [1,K]$, and $i \in [1, m_k]$.

We evaluate the log likelihood $L(\mathcal{D}_t;\theta)$ of observed data $\mathcal{D}_t$ with respect to $\theta$, 
% where $D(t)$ is $\{n_{o_{i,k}}: k = 1,2, \dots, K,~  i= 1, 2, \dots, m_k\}$. These are the number of counts for all possible $m$ observations until round $t$. 
We then compute the $n-1 \times n-1$ Fisher information matrix, $I(\theta, \mathbf{t})$, whose
 $(i,j)^{th}$ entry is given by $-\E{\frac{\partial^2}{\partial \theta_i \partial \theta_j} L(\mathcal{D}_t;\theta) | t_1, \ldots, t_K}$. For our problem, we obtain a closed form expression of $I_{i,j}(\theta, \mathbf{t})$ given in \cref{eq:fisher_info_matrix}.

%\begin{align*}
%I_{i,j}(\theta)  &= \sum_{k=1}^{K}\sum_{l = %1}^{m_K} \frac{-t_k \times  A_k(l,i) \times %A_k(l,j) \times (1 - A_k(l,n))}{q_{o_{l,k}}} \\
%&+ \frac{-t_k \times (1-A_k(l,i)) \times (1 - %A_k(l,j) \times A_k(l,n)}{q_{o_{l,k}}} 
%\end{align*}

The Cramer-Rao lower bound on covariance matrix of $\theta$ for any unbiased estimator is then given by $I(\theta, \mathbf{t})^{-1}$. Our objective is to minimize $\sum_{i = 1}^{n} \V{\estimateProb_i}$, which can be  bounded as 
\begin{align}
\sum_{j = 1}^{n} \V{\estimateProb_j} &= \sum_{j = 1}^{n-1} \V{\estimateProb_j} + \V{\estimateProb_n}, \\
&\geq tr(I(\theta, \mathbf{t})^{-1}) + \V{1-\sum_{j=1}^{n-1}\estimateProb_j}, \\
&= tr(I(\theta, \mathbf{t})^{-1}) +  \sum_{i = 1}^{n-1}\sum_{j=1}^{n-1} \textrm{Cov}(\estimateProb_i, \estimateProb_j), \label{bound} \\
&\geq tr(I(\theta, \mathbf{t})^{-1}) + \sum_{i=1}^{n-1} \sum_{j=1}^{n-1} I(\theta, \mathbf{t})^{-1}(i,j).
\end{align}
%\vspace{-13mm}
\end{proof}

Since the inverse Fisher information matrix, $I(\theta, \mathbf{t})^{-1}$, is also a lower bound on the covariance of any estimator that exhibits local asymptotic normality, therefore, when $t \rightarrow \infty$, the result in \Cref{thm:CRLB} also holds for any estimator which is asymptotically normal locally or exhibits asymptotical minimaxity. We now state the lower bound on estimation error for biased estimator with bias $b(\theta)$. 

% \GJ{Does this hold for any estimator, or just biased estimators? I think it should hold for any estimator, even if $b(\theta)$ is zero, right?}
% \SG{Yes. If $b(\theta) = 0$ it reduces the statement mentioned earlier.}
% \GJ{Then we should not say lower bound for 'biased' estimators. Just say lower bound on estimation error?}
\begin{thm}[Lower bound for any estimator with given bias]
Let terms $\mathbf{t}$ and $I(\theta, \mathbf{t})$ be defined as in the statement of \Cref{thm:CRLB}. The estimation error of any biased estimator with bias $b(\theta)$ satisfies 
\[
\epsilon(\mathbf{t}) \geq tr(\bar{I}(\theta, \mathbf{t})^{-1})  + \sum_{i = 1}^{n-1}\sum_{j = 1}^{n - 1} \bar{I}(\theta,\mathbf{t})^{-1}(i,j) + ||b(\theta)||_2^2,
\]
where $\bar{I}(\theta, \mathbf{t}) = \frac{\partial (\theta - b(\theta))}{\partial \theta} I(\theta, \mathbf{t})^{-1} \left(\frac{\partial (\theta - b(\theta))}{\partial \theta}\right)^{-1}$. 
\label{thm:CRLB_bias}
\end{thm}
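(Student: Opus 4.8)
The plan is to reuse the skeleton of the proof of \Cref{thm:CRLB}, replacing the unbiased Cramér–Rao inequality by its biased version while keeping the same Fisher information $I(\theta,\mathbf t)$ computed in \cref{eq:fisher_info_matrix}. As there, I would estimate only $\theta = (p_1,\ldots,p_{n-1})$ and regard $\tilde p_n = 1 - \sum_{j=1}^{n-1}\tilde p_j$ as a derived quantity. The first step is the bias--variance decomposition of the error coordinate-wise: writing $b_j(\theta) = \E{\tilde p_j} - p_j$, one has $\E{(\tilde p_j - p_j)^2} = \V{\tilde p_j} + b_j(\theta)^2$, so that $\varepsilon(\mathbf t) = \sum_{j=1}^n \V{\tilde p_j} + \sum_{j=1}^n b_j(\theta)^2$. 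I would then treat the variance sum and the bias sum separately.

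For the variance sum, I would invoke the biased Cramér--Rao bound: for an estimator $\hat\theta$ with mean $\E{\hat\theta} = \theta + b(\theta)$, the covariance obeys the positive-semidefinite inequality $\mathrm{Cov}(\hat\theta) \succeq \bar I(\theta,\mathbf t)^{-1}$, where $\bar I(\theta,\mathbf t)^{-1}$ is obtained by conjugating $I(\theta,\mathbf t)^{-1}$ with the Jacobian of the mean map $\theta \mapsto \theta + b(\theta)$ (this is the matrix written, up to the sign/transpose convention, as $\bar I$ in the statement). Because $I(\theta,\mathbf t)$ is already the one from \Cref{thm:CRLB}, no new log-likelihood or expectation needs to be recomputed.

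Next I would turn the matrix inequality into the two scalar terms of the bound, exactly as in \Cref{thm:CRLB}. Letting $e_i$ be the $i$-th standard basis vector and $\mathbf 1$ the all-ones vector in $\mathbb R^{n-1}$, the relation $\mathrm{Cov}(\hat\theta) - \bar I^{-1} \succeq 0$ yields $e_i^\intercal(\mathrm{Cov}(\hat\theta) - \bar I^{-1})e_i \ge 0$ and $\mathbf 1^\intercal(\mathrm{Cov}(\hat\theta) - \bar I^{-1})\mathbf 1 \ge 0$. Summing the former over $i$ gives $\sum_{j=1}^{n-1}\V{\tilde p_j} \ge tr(\bar I^{-1})$, while the latter, together with $\tilde p_n = 1 - \sum_{j=1}^{n-1}\tilde p_j$, gives $\V{\tilde p_n} = \mathbf 1^\intercal \mathrm{Cov}(\hat\theta)\,\mathbf 1 \ge \sum_{i,j}\bar I^{-1}(i,j)$. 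Together these reproduce the first two terms of the claimed bound. For the bias sum, I would note that $b_n(\theta) = -\sum_{j=1}^{n-1} b_j(\theta)$, so $\sum_{j=1}^n b_j(\theta)^2 = \|b(\theta)\|_2^2 + \left(\sum_{j=1}^{n-1} b_j(\theta)\right)^2 \ge \|b(\theta)\|_2^2$, and dropping the last nonnegative term yields the $+\|b(\theta)\|_2^2$ contribution.

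I expect the only real obstacle to be the matrix-calculus bookkeeping of the biased bound: stating the biased Cramér--Rao inequality in the conjugated form, checking that $\bar I(\theta,\mathbf t)^{-1}$ with the Jacobian (and its transpose/inverse) placed as in the statement is genuinely a lower bound on $\mathrm{Cov}(\hat\theta)$, and confirming the usual regularity conditions under which it holds. Once that inequality is secured, the remaining steps are the same quadratic-form manipulations as in \Cref{thm:CRLB} plus the elementary bias algebra above.
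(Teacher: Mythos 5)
Your proposal is correct and follows essentially the same route as the paper: invoke the biased Cram\'er--Rao bound $\mathrm{Cov}(\hat\theta)\succeq \bar I(\theta,\mathbf t)^{-1}$, convert that matrix inequality into the trace and sum-of-entries terms exactly as in the proof of \Cref{thm:CRLB}, and add the bias contribution via the MSE decomposition. Your explicit treatment of the $n$-th coordinate's bias (showing $\sum_{j=1}^n b_j(\theta)^2 \ge \|b(\theta)\|_2^2$) and your flagging of the sign/transpose convention in the stated form of $\bar I$ are, if anything, slightly more careful than the paper's one-line appeal to $\mathrm{MSE}=\mathrm{Var}+\|b(\theta)\|_2^2$.
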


%\GJ{The $\epsilon(t)$ in Theorem 5 should also be $\epsilon(\mathbf{t})$, right?}
\begin{proof}
The proof follows from the fact that Cramer-Rao lower bound on the covariance matrix of $\theta$ for any biased estimator with bias $b(\theta)$ is given by  $\bar{I}(\theta,\mathbf{t})^{-1}$, with $\bar{I}(\theta, \mathbf{t}) = \frac{\partial (\theta - b(\theta))}{\partial \theta} I(\theta, \mathbf{t})^{-1} \left(\frac{\partial (\theta - b(\theta))}{\partial \theta}\right)^{-1}$. We can evaluate the $\text{Var}(\theta)$ as in proof of \Cref{thm:CRLB}. Using this with the fact that $\text{MSE}(\theta) = \text{Var}(\theta) + ||b(\theta)||_2^2$ gives us \Cref{thm:CRLB_bias}.
\end{proof}

% As mentioned earlier, the task of minimizing $\E{\sum_{i = 1}^{i = n} (\hat{p_i}(t) - p_i)^2}$ involves two components, deciding which arm to pull at each round and generating estimate based on observations till round $t$. In this section we provide a lower bound on the estimation error of any unbiased estimator given the choice of pulls. For an unbiased estimator estimation error $\E{\sum_{i = 1}^{i = n} (\hat{p_i}(t) - p_i)^2}$ is equal to $\sum_{i = 1}^{i = n} \V{\hat{p}_i(t)}$. 

% \SG{Remark: This lower bound is for unbiased estimator. However as $T \rightarrow \infty$, the fisher information inverse also gives a lower bound on asymptotic normality? of any asymptotically unbiased estimator. Among the class of asymptotically normal estimators, MLE achieves the lower bound.}

%
%$$F(i,j) = \sum_{z = 1}^{z = m} \frac{-n_{arm(z)} \times (A(z,i) \times A(z,j) \times (1 - A(z,n)) + (1-A(z,i)) \times (1 - A(z,j)) \times A(z,n))}{\sum_{k = 1}^{n} A(z,k) \times p_k}$$

\subsection{Orderwise Achievability}
\label{achievability}

In \cref{sec:bounds}, we showed that $\epsilon(t) = \Omega \left(\frac{1}{t}\right)$. We now show that this lower bound is achievable if $rank(A) = n$ by analyzing the estimation error of \textsc{RRpull$+$PIest} algorithm. The \textsc{RRpull$+$PIest} pulls arms in a round-robin manner and uses the pseudo inverse of the matrix $A$ to produce estimate $\tilde{P}_X(t)$ at each time step $t$. Formal description of \textsc{RRpull$+$PIest} is given in \Cref{alg:NaiveAlgo}.  
% \GJ{The word "naive" means something different here than it did in the previous section, right? How about calling this the round-robin algorithm?}
% \SG{Yes. It is actually Round Robin + PseudoInverse. I wanted to change the name too. Is just Round Robin good though? Wouldn't tell about the PseudoInverse part.}
% \GJ{RR-pulls PI-est, RR-pulls ML-est, LB-pulls ML-est, UB-pulls ML-est?}
%it is possible to achieve the estimation error of $O\left(\frac{1}{t}\right)$. 
\begin{algorithm}[t]
\hrule 
\vspace{0.1in}
\begin{algorithmic}[1]
\STATE \textbf{Input:} $\{x_1, x_2, \ldots, x_n\}$, Functions $\{g_1, g_2 \ldots g_K\}$ where $g_i: \{x_1, x_2, \ldots, x_n \} \rightarrow \mathbb{R}$. Total number of steps, $T$. 
\STATE \textbf{Initialize:} $t_k = 0$ $\forall k.$ $t_{k,i} = 0, \forall{i,k}.$ $\estimateProb_j(0) = \frac{1}{n}, \forall{j}$. %$c_{1:t-1} = \phi$
\FOR{$t = 1:T$}
\STATE $c_t = \mod(t,K) + 1$
%\STATE $c_t = \argmin_{k} \tilde{E}(S(t),\hat{p}(t-1)|S(t-1))$
\STATE Pull arm $c_t$, observe output $y_t$
\STATE $t_k = t_k + 1$ 
%\STATE $S(t) = \{S(t-1), c_t\}$
\IF {$y_t = z_{k,i}$}
\STATE $t_{k,i} = t_{k,i} + 1$
\ENDIF
\STATE $\hat{q}_{k,i} = \frac{t_{k,i}}{t_k}$ $\forall{i,k}.$
\STATE Obtain estimates $\estimateProb_j(t)$ as  $\tilde{P}_X = A^+\hat{Q}$. 
\ENDFOR
\end{algorithmic}
\vspace{0.1in}
\hrule
\caption{\textsc{RRpull $+$ PIest}}
\label{alg:NaiveAlgo}
\end{algorithm}

%Two steps involved in the problem of estimating true probability distribution are deciding what arms to pull and generating estimates based on observed data.

%\begin{lem}
%\label{unbiasedEstimates}

%\end{proof}

% From \Cref{achievable} we know that asymptotically consistent estimation is possible if $rank(A) = n$. We generate estimates from this observed data by solving the equation $AY = P_o$, which is described in \Cref{sec:conditions}.  Observe that the estimate obtained through this method will be unbiased. This can be shown by taking expectation on both side of the equation $AY = P_o$. Since the estimator is unbiased, error metric $\E{\sum_{i =1}^{n}(\hat{p}_i(t) - p_i)^2}$ is equal to $\sum_{i = 1}^{n}\V{\hat{p}_i(t)}$. 

\begin{thm}[Order-wise Achievability]
\label{thm:orderwise}
It is possible to achieve estimation error of $\OO\left(\frac{1}{t}\right)$ if $rank(A) = n$.
\end{thm}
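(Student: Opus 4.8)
The plan is to analyze the \textsc{RRpull$+$PIest} estimator directly and show its mean-squared error decays like $1/t$. The starting point is a clean algebraic identity. Because $\textrm{rank}(A) = n$, the matrix $A$ has full column rank, so its pseudo-inverse satisfies $A^+ A = I_n$; combined with the exact relation $Q = A P_X$ from \Cref{rankEqn}, this gives $A^+ Q = A^+ A P_X = P_X$. Hence the estimate produced by the algorithm, $\tilde{P}_X(t) = A^+ \hat{Q}(t)$, has estimation-error vector
\[
\tilde{P}_X(t) - P_X = A^+\bigl(\hat{Q}(t) - Q\bigr),
\]
a \emph{deterministic} linear image of the error in estimating $Q$. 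Since $\E{\hat{Q}(t)} = Q$, the estimator is unbiased, so $\varepsilon(t) = \E{\|\tilde{P}_X(t) - P_X\|_2^2}$ is purely a variance term and no bias contribution needs to be tracked.

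First I would propagate the error through the linear map using the operator norm. Writing $\|A^+\|_{\mathrm{op}}$ for the spectral norm of $A^+$, we have
\[
\varepsilon(t) = \E{\bigl\|A^+(\hat{Q}(t)-Q)\bigr\|_2^2} \le \|A^+\|_{\mathrm{op}}^2\, \E{\bigl\|\hat{Q}(t)-Q\bigr\|_2^2}.
\]
The key point is that $\|A^+\|_{\mathrm{op}} = 1/\sigma_{\min}(A)$, where $\sigma_{\min}(A)$ is the smallest singular value of $A$, which is strictly positive precisely because $A$ has full column rank. This is a finite constant depending only on the fixed functions $g_1,\ldots,g_K$ and not on $t$.

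Next I would bound $\E{\|\hat{Q}(t)-Q\|_2^2} = \sum_{k=1}^{K}\sum_{i=1}^{m_k} \E{(\hat{q}_{k,i}(t)-q_{k,i})^2}$. Each $\hat{q}_{k,i}(t) = t_{k,i}/t_k$ is the empirical frequency of output $z_{k,i}$, and conditioned on $t_k$ pulls of arm $k$, $t_{k,i}$ is Binomial$(t_k, q_{k,i})$; thus $\hat{q}_{k,i}(t)$ is unbiased with variance $q_{k,i}(1-q_{k,i})/t_k$. The squared Euclidean norm decomposes into a sum of these marginal variances, so no independence (within or across arms) is needed. Using $\sum_{i=1}^{m_k} q_{k,i}(1-q_{k,i}) \le \sum_{i=1}^{m_k} q_{k,i} = 1$ for each arm gives $\E{\|\hat{Q}(t)-Q\|_2^2} \le \sum_{k=1}^{K} 1/t_k$. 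Since the arms are pulled round-robin, $t_k \ge \lfloor t/K\rfloor \ge t/(2K)$ for all $t$ sufficiently large, so $\E{\|\hat{Q}(t)-Q\|_2^2} \le 2K^2/t$. Combining with the previous step yields $\varepsilon(t) \le 2K^2\|A^+\|_{\mathrm{op}}^2 / t = \OO(1/t)$, as claimed.

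I do not expect a genuine obstacle: the argument is short once the identity $A^+ Q = P_X$ is in place. The one place that needs care is the operator-norm step, where the decay rate hinges on $\sigma_{\min}(A)$ being bounded away from zero. This is exactly what the hypothesis $\textrm{rank}(A)=n$ guarantees, and it is what keeps the linear inversion well-conditioned; a poorly conditioned $A$ can only worsen the constant, never the $\OO(1/t)$ order. Matching this upper bound with the $\Omega(1/t)$ lower bound from \Cref{sec:bounds} then establishes order-wise optimality.
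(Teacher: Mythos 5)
Your proof is correct, and it analyzes the same algorithm (\textsc{RRpull$+$PIest}) with the same skeleton as the paper: unbiasedness via $A^{+}Q = P_X$, followed by a variance bound that scales as $1/t$. The one genuine difference is in how the variance is propagated through $A^{+}$. The paper expands $\V{\estimateProb_j(t)} = \V{\sum_h A^+(j,h)\hat{Q}(h)}$ coordinate-by-coordinate and drops the cross-covariance terms by appealing to the negative correlation of the entries of $\hat{Q}$ within each arm; this step is delicate, because the cross term $A^+(j,h)A^+(j,h')\,\mathrm{Cov}(\hat{Q}(h),\hat{Q}(h'))$ is only guaranteed nonpositive when $A^+(j,h)$ and $A^+(j,h')$ have the same sign, and pseudoinverse entries can have mixed signs. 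Your route sidesteps this entirely: you write the error vector as $A^{+}(\hat{Q}-Q)$, pull out $\|A^{+}\|_{\mathrm{op}}^2 = 1/\sigma_{\min}(A)^2$ (finite exactly because $\mathrm{rank}(A)=n$), and then observe that $\E{\|\hat{Q}-Q\|_2^2}$ decomposes \emph{exactly} into the sum of marginal binomial variances with no covariance terms to control. This gives a fully rigorous bound with an explicit constant $2K^2/\sigma_{\min}(A)^2$, at the cost of being slightly looser than a sharp coordinate-wise bound could be; since only the $\OO(1/t)$ order is claimed, nothing is lost. Your closing remarks — that conditioning is what determines the constant, not the rate, and that the matching $\Omega(1/t)$ lower bound comes from \Cref{sec:bounds} — are consistent with the paper.
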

\begin{proof}[Proof of \Cref{thm:orderwise}]
 In order to show achievability we consider the \textsc{RRpull + PIest} algorithm that pulls arm in a round-robin manner due to which each arm is pulled $\frac{t}{K}$ times in $t$ steps. 
 For each $k=1,\ldots, K$ and $i=1,\ldots, m_k$ let  $\hat{q}_{k,i}(t)=\frac{t_{k,i}}{{t}/{K}}$ 
 be the
 estimate for $q_{k,i}$. From these estimates, we can generate estimates $\tilde{P}_X = [\estimateProb_1(t), \ldots, \estimateProb_n(t)]^\intercal$ by solving the system of equations described by \Cref{rankEqn}. More precisely, 
 with
$\hat{Q} = [\hat{q}_{1,1}(t), \ldots, \hat{q}_{1,m_1}(t), \ldots \hat{q}_{K,m_K}(t)]^\intercal$, we can solve
\begin{equation}
A\tilde{P}_X = \hat{Q}.
\label{eq:new_linear}
\end{equation}
% \GJ{How about changing the notation $\tilde{Y}$ to $\tilde{P}_X$ or $\tilde{\mathbf{p}}_X$?}
% \SG{We can do that if we define $\tilde{P}_X$ as a column vector right from beginning..}
% \GJ{Right after equation (2), it would be good to state $A Y = Q$ to make the explanation here more clear}
% \SG{Sure. Will do that.}
%  \SG{Maybe we should write that we pull arms in a round robin manner. This will emphasize sequential nature of the problem}

First, we show that the estimates $\tilde{P}_X(t)$  are unbiased. Let  $Q$ be the list of true probabilities of observations, i.e.,
$Q= [{q}_{1,1}, \ldots, {q}_{1,m_1},{q}_{2,1}, \ldots, {q}_{2,m_2} \ldots {q}_{K,m_K}]^\intercal$. Observe that the length of $Q$ is $m_1 + m_2 + \dots + m_K$. 
% \GJ{Make this $Q_0$ definition more clear. It is not clear that the length of this vector is $m_1 + m_2 + \dots + m_K$}
The solution of \Cref{eq:new_linear} is given by $\tilde{P}_X = A^+\hat{Q}$, where $A^+$ is the pseudoinverse or the Moore-Penrose inverse of the matrix $A$. Thus, we get
\begin{align}\nonumber 
    \E{\tilde{P}_X} = \E{A^{+}\hat{Q}} = A^{+}\E{\hat{Q}}=A^{+}Q,
\end{align}
upon using the fact that the estimates $\hat{q}_{k,i}(t) = \frac{t_{k,i}}{{t}/{K}}$ are unbiased. Here $t_{k,i}$ denotes the number of times $i^{\text{th}}$ output of arm $k$, i.e $z_{k,i}$, is observed. The desired result
$\E{\tilde{P}_X}=P_X$
is now established as we note that $A^{+}Q = P_X$ in view of \Cref{rankEqn}.
 
% We remark that our estimates are of the form $\estimateProb_i = A^+(i)\hat{Q}_o$
% where $A^+(i)$ is the $i^{th}$ row of the Moore-Penrose inverse of $A$. \GJ{Rows of a matrix are generally denoted by small and boldfaced letters}
% \SG{Should I write it as $\mathbf{a^{+}(i)}$ then?}
% \GJ{Actually, it seems that we don't need to define any notation for the row of $A^+$. Can just remove the above remark} 
Next, we derive a bound on the estimation error $\varepsilon(t)$. It is easy to see 
that the variance of each empirical estimator $\hat{q}_{k,i}(t) = \frac{t_{k,i}}{{t}/{K}}$ is $\OO\left(\frac{K}{t}\right)$. 
With $m=m_1+\ldots+m_K$ denoting the number of rows in $A$, we then get 
\begin{align}
\varepsilon(t)
&= \sum_{j = 1}^{n}\V{\estimateProb_j(t)}
\\
&= \sum_{j =1}^{n} \V{\sum_{h =1}^{m} A^+(j,h)\hat{Q}(h)} 
\\
&\leq  \sum_{j = 1}^{n} \sum_{h = 1}^{m} \left(A^+(j,h)\right)^2 \V{\hat{Q}(h)}
\end{align}
\begin{align}
&= \sum_{j = 1}^{n} \sum_{k = 1}^{K} \sum_{i = 1}^{m_k} \left(A^+(j,s+i)\right)^2 \frac{q_{k,i}(1 - q_{k,i})}{t_k} \label{errorBound} \\
&= \sum_{j = 1}^{n}  \sum_{k=1}^{K}\sum_{i = 1}^{m_k} \left(A^+(j,s+i)\right)^2 \frac{K q_{k,i}(1 - q_{k,i})}{t} \\
&= \OO\left(\frac{1}{t}\right) 
\end{align}
where $s = \sum_{\ell = 1}^{k-1}m_\ell$. The inequality follows from the fact that elements in $\hat{Q}$ are negatively correlated since $\sum_{i=1}^{m_k}{\hat{q}_{k,i}} = 1$ for each $k=1,\ldots,K$.
\end{proof}

\section{Proposed Sequential Distribution Learning Algorithms}
\label{sec:algo}

\begin{figure}[t]
    \centering
    \includegraphics[width=0.45\textwidth]{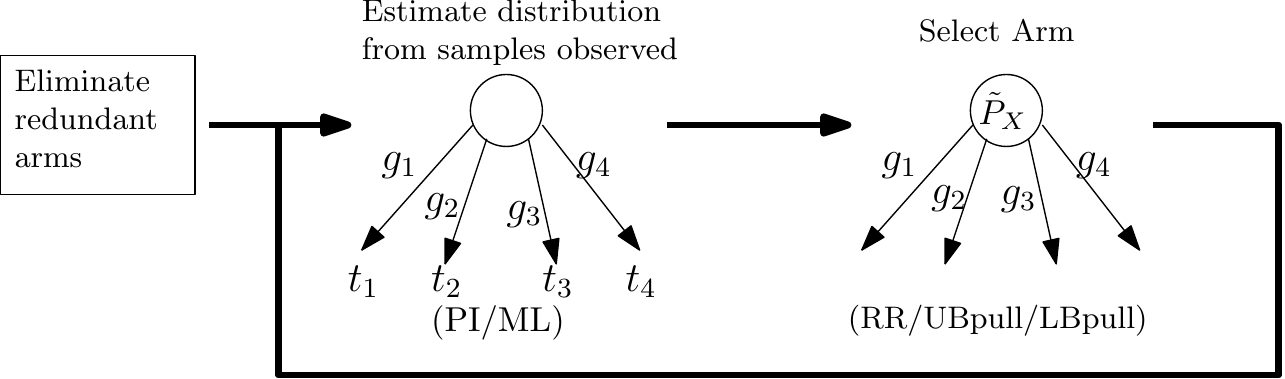}
    \caption{\sl The design of algorithm has two key components: i) Estimating $P_X$ from the samples observed which can be done by the maximum likelihood (ML) or pseudoinverse (PI) estimation schemes ii) Choosing the next arm which can be done in a Round-Robin(RR) manner or by using the \textsc{UBpull}, \textsc{LBpull} strategies.}
    \label{fig:summary1}
\end{figure}

The design of an algorithm to minimize the estimation error can be divided into two parts: 1) producing the estimate of the distribution $\tilde{P}_X(t)$ based on the samples observed till step $t$, and 2) deciding which arm to pull at each time $t$. In \Cref{sec:combining} and \Cref{sec:pulling}, we describe these two parts. \Cref{alg:formalAlgo} and \Cref{alg:formalAlgoLB} describes our proposed algorithms.

\subsection{Combining observations to estimate $P_X$}
\label{sec:combining}
% We first focus on the task of generating estimates based on the observations $D(t)$ till round $t$. In other words, we try to generate $\{\estimateProb_1(t), \estimateProb_2(t), \ldots, \estimateProb_n(t)\}$ given actions $\{a_1, a_2, \dots a_t\}$

We present a method to estimate $P_X$ given $\mathbf{t} = [t_1, \ldots, t_K]^\intercal$, where $t_k$ is the number of times arm $g_k$ is pulled until time $t$.

In the \textsc{RRpull$+$PIest} Algorithm, the estimate of $P_X$ was obtained using the Moore-Penrose inverse and the empirical probabilities of the observed output, $\hat{Q}$. A drawback of this estimation scheme is that it does not account for the number of times each arm is pulled. Motivated by this we propose the use of Maximum Likelihood Estimator for estimating $P_X$, which takes into account the number of times each arm is pulled to produce estimated probabilities. 
% \SG{Naive combining is just pseudoinverse times empirical probabilities. It does not account for the number of times each arm is pulled. A better recombining would be one that does that as well.. MLE comes to mind and it is asymptotically normal as well.. and is best amongs that category}
% This motivates us to use the maximum likelihood estimation to produce estimated distribution $\tilde{P}_X(t)$ given samples observed until step $t$.

Recall that we defined $t_{k,i}$ as the number of times $i^{th}$ output from arm $k$, i.e., $z_{k,i}$, is observed. Let $\tilde{q}_{k,i}(t)$ be the probability of observing output $z_{k,i}$ under the probability distribution $\tilde{P}_X(t) = [\estimateProb_1(t), \estimateProb_2(t), \ldots, \estimateProb_n(t)]^\intercal$. 
% Let $\mathcal{D}_t$ be the event that after $t$ slots, we observe the $k^{th}$ output from arm $i$ $n_{o_{i,k}}$ times, for all $k \in [1,K]$, and $i \in [1, m_k]$.
% Define $D(t)$ as $\{n_{o_{i,k}}: k = \{1,2, \dots, K\}, i = \{1, 2, \dots, m_k\}\}$. These are the number of counts for all possible $m$ observations until round $t$. 
The log likelihood of $\mathcal{D}_t$ with respect to the probability distribution $\estimateProb(t)$ is given by
\begin{equation}
L(\mathcal{D}_t;\tilde{P}_X(t)) = \sum_{k=1}^{K}\sum_{i=1}^{m_k} (t_{k,i}+1) \log(\tilde{q}_{k,i}(t)).
\label{eqn:LogLikelihood}
\end{equation}
where, 
$\tilde{q}_{k,i} = \sum_{j = 1}^{n} A_k(i,j)\estimateProb_j.$ Note that we smooth the log-likelihood by using $t_{k,i}+1$ instead of $t_{k,i}$.
% \GJ{Change $n_{o_{i,k}}$ in the above equation to $n_{o_{i,k}}+1$ to smooth the log-likelihood.}
% \SG{Sure. Will do that and mention additive smoothing.}
In order to obtain the maximum likelihood estimate of $\estimateProb(t)$, we take the derivative of $L(\mathcal{D}_t;\estimateProb(t))$ and equate it to zero under the constraint $\sum_{i = 1}^{n}\estimateProb_i(t) = 1$. This provides us a set of equations described by 
\begin{equation}
\estimateProb_j(t) = \frac{1}{t} \sum_{k=1}^{K}\sum_{i=1}^{m_k} (t_{k,i}+1)\frac{A_k(i,j)\estimateProb_j(t)}{\tilde{q}_{k,i}(t)}, ~~j =1,2, \ldots, n.
\end{equation}
% \GJ{Change $n_{o_{i,k}}$ in the above equation to $n_{o_{i,k}}+1$ to smooth the log-likelihood.}

Observe that these set of equations are in the form of $x = f(x)$ and thus can be solved numerically by finding a fixed point using fixed point iteration method\cite{burden19852}.
%\GJ{Add a reference on fixed-point equation solving} 
%The approach employed in our algorithm is to start with a guess $x_0$,  evaluate $x = f(x_0)$, and continue in the same manner by updating $x_0$ as $x$. This process is repeated until $||x - x_0|| \leq \delta$, where $\delta> 0$. In our algorithm, initial guess is taken as $\hat{p}(t - 1)$. 
Since the log likelihood function is concave in $\estimateProb(t)$, the solution from the set of equations described above maximizes the log likelihood function.
% \SG{This is equivalent to the EM approach} 
It is known that the Maximum Likelihood Estimate $\hat{\theta}$ of a parameter $\theta$ behaves as $N(\theta, I(\theta)^{-1})$ asymptotically, where $I(\theta)$ is the Fisher Information matrix; here $N(\mu, \sigma^2)$ denotes the normal distribution with mean $\mu$ and variance $\sigma^2$. This means that MLE estimator is asymptotically consistent and belongs to the class of asymptotically normal estimator. Therefore, the lower bound in \Cref{thm:CRLB} holds for MLE estimator and it achieves the stated lower bound asymptotically.

\subsection{Deciding which arm to pull}

\label{sec:pulling}
\Cref{sec:combining} described the Maximum Likelihood estimation approach to estimate $\tilde{P}_X(t)$ from the observations till time step $t$. In this section, we focus on the strategy to pull arm at step $t+1$ given observations till time step $t$. Although the round-robin arm-pulling strategy used in \Cref{alg:NaiveAlgo} achieves order-wise optimal error (\Cref{thm:orderwise}), it has two key drawbacks. Firstly, it is agnostic to the functions $g_k(X)$, and thus even redundant arms will be pulled $t/K$ times. Secondly, it does not consider the distribution estimate $\tilde{P}_X(t)$ when deciding which arm to pull.
%
%\GJ{Explain why giving equal pulls to all non-redundant arms is bad} \SG{Should we make a mention to simulation here? It is seen in simulation.. or make a comment that the number of pulls to each arm should depend on the functions and probability distribution.} 
We now propose an arm-pulling strategy that addresses these shortcomings. The first part of our algorithm involves removal of redundant arms. In the second part we define two strategies, namely \textsc{UBpull} and \textsc{LBpull} that can be used to choose an arm in each step.

\textbf{Removing redundant arms.} For each pair of arms $r,s$ evaluate $rank(A_r), rank(A_s)$ and $rank(B)$, where $B = [A_r^\intercal, A_s^\intercal]^\intercal$. If $\text{rank}(B) = \text{rank}(A_r) > \text{rank}(A_s)$ remove arm $s$. If $\text{rank}(B) = \text{rank}(A_r) = \text{rank}(A_s)$ remove any one of $r$ or $s$ uniformly at random. This leaves us with a new matrix $\bar{A}$ with reduced number of rows (as some arms are removed). Without loss of generality, from now onwards we assume that matrix $A$ does not contain any redundant arm.  
% \GJ{A potential idea to construct a genie scheme. What if a genie gives the exact expression for $\varepsilon(t)$ after $t$ rounds, so that we can accurately find the best arm that minimizes the error?}
% \GJ{But I am not sure whether this scheme will be a lower bounding scheme because it greedily chooses the arm that minimizes $\varepsilon(t)$ at each time $t$. Will such a greedy policy be optimal in terms of the error after many rounds?}
% \SG{The problem is that we can't write down the exact expression for $\varepsilon(t)$. (How will the genie give us exact expression? I am a little unclear, maybe we can discuss in meeting.) If we could do that for any allocation $\mathbf{t}$. Then one can just choose the vector $\mathbf{t}$ that minimizes the error expression. We are evaluating error after $t$ rounds, so only thing that matters is the allocation $\mathbf{t}$.}
% \GJ{What if the genie gives us the error resulting from pulling each of the arms in the next round? Then we can trivially choose the arm that minimizes the error}

\textbf{The \textsc{UBpull} strategy to choose the next arm.} If we had an analytic expression for estimation error $\varepsilon(t)$ at each step $t$, we could find the arm that minimizes the estimation error. However, in the absence of an invertible arm, it is hard to obtain an analytic expression of $\varepsilon$, due to which we resort to a heuristic approach. In equation \eqref{errorBound} we see an upper bound on estimation error for \textsc{RRpull$+$PIest} algorithm. An approach towards choosing arm could be to minimize this upper bound on the estimation error. However since true probability distribution $P_X$ is unknown, we can obtain an estimate of this upper bound as 
\begin{align}
    U(\tilde{P}_X(t), \mathbf{t}) = \sum_{j = 1}^{n} \sum_{k = 1}^{K} \sum_{i = 1}^{m_k}&\bigg( \left({A}^+(j,s+i)\right)^2 \times \nonumber \\& \frac{\tilde{q}_{k,i}(t)(1 - \tilde{q}_{k,i}(t))}{t_k}\bigg),
\label{eq:upperbound}
\end{align}
where, $s = \sum_{\ell = 1}^{k-1}m_\ell.$

% \GJ{$\tilde{q}$ in the above equation should be indexed by $t$}
% \SG{Will add index}
Following this idea, we propose a \textsc{UBpull} decision scheme which makes use of the observations made till time step $t$ to select arm $c_{t+1}$ at time step $t+1$. The \textsc{UBpull} scheme selects an arm $c_{t+1}$, if pulling $c_{t+1}$ would result in maximum decrease of $U(\tilde{P}_X(t), \mathbf{t})$. More formally, we choose $c_{t+1}$ that maximizes 
\begin{equation}
c_{t+1} = \argmax_k U(\tilde{P}_X(t),\mathbf{t}) - U(\tilde{P}_X(t),\bar{\mathbf{t}}^{(k)}),
\label{eqn:V_tilde_def}
\end{equation}
with ties broken uniformly at random. Here $\bar{\mathbf{t}}^{(k)} = \mathbf{t} + \mathbf{e}^{(k)}$, with $\mathbf{e}^{(k)}$ representing a $K$ length column vector with $e^{(k)}_i = 0$ $\forall{i \neq k}$ and $e^{(k)}_k = 1$. This results in $\mathbf{t}^{(k)} = [t_1, t_2, \ldots ,t_k + 1, \ldots t_K]^{\intercal}$. 

% \GJ{I would suggest eliminating this notation $c_t$ and just say that pull the arm $k^*$ that maximizes either the lower bound or upper bound on estimation error}
% \SG{Using this notation clarifies our method of taking a hypothesis pull}
% Let $c_{1:t}=[c_1, c_2, \dots c_t]$ be the vector of indices of the arms pulled until time $t$. Given $c_{1:t}$, we can compute $t_1, t_2, \ldots, t_K$. Our decision in step $t+1$ will be based on comparing the value of the function 
% \begin{equation}
% \bar{V}(c_{t+1} = k|c_{1:t},\estimateProb(t)) \triangleq U(\tilde{Q}(t), \mathbf{\bar{t}}) 
% \label{eqn:V_tilde_def}
% \end{equation}
% %\GJ{What is $\bar{t}$ above?}
% %\SG{$\bar{t}$ is defined below}
% \GJ{Can't we avoid defining $\bar{t}$ this?}
% \SG{Can avoid for UB-SDL. By defining the decrease in error as the metric. But doing that for LB-SDL would be messy.}
% \GJ{Should define $V$ as the change in the estimation error bound, rather than $U$ directly?}

% across all arms, which is a heuristic estimate of the error if $c_{t+1} = k$ (i.e., if arm $k$ is picked next), given $c_{1:t}$ and $\estimateProb(t)$. Here $\bar{t}_s = t_s$ $\forall$ $s \neq k$ and $\bar{t}_k = t_k + 1$.

% In particular, at each step we choose the arm that minimizes $\bar{V}$, i.e.,
% \begin{equation}
% c^*_{t+1} = \argmin_{k} \bar{V}(c_{t+1} = k|c_{1:t},\estimateProb(t)),
% \end{equation}
% with ties broken uniformly at random. 

The \textsc{UBpull} decision scheme along with the maximum likelihood estimation scheme proposed in \Cref{sec:combining} completes the design of \textsc{UBpull$+$MLest} algorithm.
% Method for extimating $P_X$ from observations proposed in \Cref{sec:combining} along with this arm pulling scheme completes the design of \textit{UBpull + MLest} Algorithm. 
A formal description of \textsc{UBpull$+$MLest} is presented in \Cref{alg:formalAlgo}.

\begin{algorithm}[t]
\hrule 
\vspace{0.1in}
\begin{algorithmic}[1]
\STATE \textbf{Input:} $\{x_1, x_2, \ldots, x_n\}$, Functions $\{g_1, g_2 \ldots g_K\}$ where $g_i: \{x_1, x_2, \ldots, x_n \} \rightarrow \mathbb{R}$. Total number of steps, $T$. 
\STATE \textbf{Initialize:} $t_{k,i} = 0, \forall{i,k}.$ $\estimateProb_j(0) = \frac{1}{n}, \forall{j}$. %$c_{1:t-1} = \phi$
\STATE Eliminate Redundant Arms
\FOR{$t = 1:T$}
\STATE $c_t = \argmax_k U(\tilde{P}_X(t),\mathbf{t}) - U(\tilde{P}_X(t),\bar{\mathbf{t}}^{(k)})$ %\GJ{Instead of $V$ write the $UB$ expression here. We could eliminate the notation $V$ altogether}
%\STATE $c_t = \argmin_{k} \tilde{E}(S(t),\estimateProb(t-1)|S(t-1))$
\STATE Pull arm $c_t$, observe output $y_t$
%\STATE $S(t) = \{S(t-1), c_t\}$
\IF {$y_t = z_{k,i}$}
\STATE $t_{k,i} = t_{k,i} + 1$
\ENDIF
\STATE Obtain estimates $\estimateProb_j(t)$ by obtaining fixed point solution of the set of equations described by $$\estimateProb_j(t) = \frac{1}{t} \sum_{k=1}^{K}\sum_{i=1}^{m_k} (t_{k,i} + 1)\frac{A_k(i,j)\estimateProb_j(t)}{\tilde{q}_{k,i}(t)},~~$$ $~~$for $j=1,2, \ldots, n.$ 
\ENDFOR
\end{algorithmic}
\vspace{0.1in}
\hrule
\caption{\textsc{UBpull $+$ MLest}}
\label{alg:formalAlgo}
\end{algorithm}

\begin{thm}
\Cref{alg:formalAlgo} does asymptotically consistent estimation whenever $rank(A) = n$.
\label{thm:consistentAlgo}
\end{thm}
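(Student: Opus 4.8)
The plan is to reduce asymptotic consistency to the single claim that every (non-redundant) arm is pulled infinitely often, and then invoke the consistency of the maximum-likelihood estimator established in \Cref{sec:combining}. First I would observe that redundant-arm removal preserves the rank: by \Cref{def:subsetarm} a deleted arm's rows lie in the row space of a retained arm, so the row space of $\bar{A}$ equals that of $A$ and hence $\mathrm{rank}(\bar{A})=n$. It therefore suffices to treat the reduced instance, and from now on I assume $A$ has no redundant arm and full column rank $n$ (so $m\ge n$ and $A^{+}=(A^\intercal A)^{-1}A^\intercal$). The endgame is then standard: if $t_k\to\infty$ for every arm $k$, the Strong Law gives $\hat{q}_{k,i}(t)\to q_{k,i}$ almost surely; since $\mathrm{rank}(A)=n$ makes $P_X$ the unique solution of $AP_X=Q$ and the MLE is asymptotically consistent (as noted after \eqref{eqn:LogLikelihood}), we get $\tilde{P}_X(t)\to P_X$ a.s., and bounded convergence (all coordinates lie in $[0,1]$) yields $\varepsilon(t)=\E{\sum_{j}(\estimateProb_j(t)-p_j)^2}\to 0$.

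The substance is the claim that $t_k\to\infty$ for every $k$, which I would prove by contradiction. Writing $\Delta_k(t)=U(\tilde{P}_X(t),\mathbf{t})-U(\tilde{P}_X(t),\bar{\mathbf{t}}^{(k)})$ for the one-step decrease in \eqref{eq:upperbound}, only the arm-$k$ block changes, so
\[
\Delta_k(t)=\Big(\tfrac{1}{t_k}-\tfrac{1}{t_k+1}\Big)\,W_k(t),
\]
where $W_k(t)=\sum_{j=1}^{n}\sum_{i=1}^{m_k}(A^{+}(j,s_k+i))^2\,\tilde{q}_{k,i}(t)(1-\tilde{q}_{k,i}(t))$ with $s_k=\sum_{\ell<k}m_\ell$, and $W_k(t)$ is uniformly bounded because $|A^{+}|$ is bounded and $\tilde{q}_{k,i}(1-\tilde{q}_{k,i})\le\tfrac14$. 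Since finitely many arms are pulled over infinitely many steps, at least one arm is pulled infinitely often, and for every such arm $\ell$ the factor $\tfrac{1}{t_\ell}-\tfrac{1}{t_\ell+1}=\tfrac{1}{t_\ell(t_\ell+1)}\to 0$, so $\Delta_\ell(t)\to 0$. Now suppose some arm $k$ were pulled only finitely often; then $t_k$ is eventually a constant $t_k^\star$, and since \textsc{UBpull} always selects an argmax and breaks ties uniformly at random, $k$ being a (co-)maximiser infinitely often would a.s.\ lead to its selection infinitely often. Hence $k$ must eventually be strictly sub-maximal, forcing $\Delta_k(t)$ to be at most the decrease $\Delta_{c_{t+1}}(t)$ of the arm actually chosen, which is one of the infinitely-pulled arms and hence $\to 0$. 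As the prefactor $\tfrac{1}{t_k^\star(t_k^\star+1)}$ is a fixed positive constant, this forces $W_k(t)\to 0$.

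The crux—and the step I expect to be the main obstacle—is showing $\liminf_{t}W_k(t)>0$, which contradicts $W_k(t)\to 0$ and closes the argument. Two ingredients are needed. Structurally, the $(k,i)$-th column of $A^{+}$ equals $(A^\intercal A)^{-1}a_{k,i}$, where $a_{k,i}\neq 0$ is the corresponding row of $A$; invertibility of $(A^\intercal A)^{-1}$ makes this column nonzero, so for each $i$ there is a $j$ with $A^{+}(j,s_k+i)\neq 0$. Probabilistically, I must keep each $\tilde{q}_{k,i}(t)$ bounded away from both $0$ and $1$. Here the smoothing $t_{k,i}+1$ in \eqref{eqn:LogLikelihood} is essential: the infinitely-pulled arms drive $\tilde{P}_X(t)$ toward the affine subspace on which their output probabilities match their converged empirical values, and restricted to that subspace the finitely-pulled arms' contribution $\sum_{i}(t_{k,i}+1)\log\tilde{q}_{k,i}(t)$ has every coefficient $\ge 1$ and thus acts as a log-barrier whose maximiser keeps each $\tilde{q}_{k,i}(t)$ in the interior; since $m_k\ge 2$ after redundancy removal and $\sum_i\tilde{q}_{k,i}=1$, this bounds $\tilde{q}_{k,i}(t)(1-\tilde{q}_{k,i}(t))$ below. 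Making this interior bound uniform in $t$—so that it does not degrade as the dominant data mass grows—is the delicate point: the arm-$k$ contribution stays $O(1)$ because $t_k^\star$ is finite, while only the already-pinned complementary directions receive the growing mass, and it is precisely this that prevents the estimate from drifting to a vertex. Establishing this uniform interior bound rigorously is the part I would need to argue most carefully.
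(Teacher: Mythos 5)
Your proposal follows essentially the same route as the paper: both reduce the theorem to showing that every non-redundant arm is pulled infinitely often, prove that claim by contradiction using the fact that the \textsc{UBpull} criterion factorizes as $\bigl(\tfrac{1}{t_k}-\tfrac{1}{t_k+1}\bigr)W_k(t)$ with $W_k(t)$ bounded away from zero, and then conclude via the asymptotic properties of the maximum-likelihood estimator. The ``crux'' you flag --- keeping each $\tilde{q}_{k,i}(t)$ uniformly in the interior via the $t_{k,i}+1$ smoothing acting as a log-barrier --- is exactly the content of the paper's \Cref{lem:qpositive} in the appendix, and your sketch of it matches the paper's argument.
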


\begin{proof}

In order to show \Cref{thm:consistentAlgo}, we first show that under \Cref{alg:formalAlgo} each non-redundant arm is pulled infinitely many times as $t \rightarrow \infty$. More formally, for each non-redundant arm $k$, $t_k \rightarrow \infty$ as $t \rightarrow \infty$. 

% Since $\text{rank}(\tilde{A}) = n$, $\tilde{A}^+ = (\tilde{A}^\intercal \tilde{A})^{-1}\tilde{A}^\intercal$. Equivalently, $\tilde{A}^\intercal = (\tilde{A}^\intercal \tilde{A}) \tilde{A}^+$. If $\sum_{i = 1}^{n} (\tilde{A}^+(i,j))^2 = 0$, it implies $\tilde{A}^+(i,j) = 0$ $\forall{i}$. This in turn would imply that the $j^{\text{th}}$ row in $\tilde{A}$ is filled only with zeros as $\tilde{A}^\intercal = (\tilde{A}^\intercal \tilde{A}) \tilde{A}^+$. However $\tilde{A}$ has at least one non-zero entry in each row, therefore $\sum_{i = 1}^{n} (\tilde{A}^+(i,j))^2 > 0$ $\forall{j}$. 

Observe that the next arm is selected as 
$$c_{t+1} = \arg\max_k \zeta_k \left(\frac{1}{t_k} - \frac{1}{t_k+1}\right),$$
with, $$\zeta_k = \sum_{i = 1}^{m_k} \left(\sum_{j = 1}^{n} (A^+(j,i+s))^2\right) \tilde{q}_{k,i}(t)(1 - \tilde{q}_{k,i}(t)),$$ where $s = \sum_{\ell = 1}^{k-1}m_\ell.$
We have that $\sum_{i = 1}^{n} (\tilde{A}^+(j,i))^2 > 0$ $\forall{i}$ and from \Cref{lem:qpositive} (See Appendix), $0 < \tilde{q}_{k,i} < 1$ $\forall{k,i}$. This results in $\zeta_k > 0$ $\forall{k}$. 

Let us assume that a non-redundant arm $k$ is pulled only $t_k = \OO(1)$ times in a total of $t$ pulls, where $t \rightarrow \infty$. Any other arm $s$ can only be pulled if $\zeta_s \left( \frac{1}{t_s} - \frac{1}{t_s+1} \right) > \zeta_k \left( \frac{1}{t_k} - \frac{1}{t_k+1} \right)$. Due to this $t_s < \sqrt{\frac{\zeta_s}{\zeta_k}}(t_k + 1)$ $\forall{s \neq k}$. Therefore $t \leq \sum_{s = 1}^{K} \sqrt{\frac{\zeta_s}{\zeta_k}}(t_k + 1) = \OO(1),$ as we assumed $t_k = \OO(1)$. However, this contradicts our assumption that $t \rightarrow \infty$. Therefore as $t \rightarrow \infty$, $t_k \rightarrow \infty$. 

As each non-redundant arm is pulled infinitely many times, we see that each element in the Fisher information matrix \eqref{eq:fisher_info_matrix} approaches infinity. Since the variance in the maximum likelihood estimator  approaches the inverse Fisher information asymptotically, maximum likelihood estimator will asymptotically achieve the bound in \Cref{thm:CRLB}. Since each element in Fisher information approaches infinity, the bound in \Cref{thm:CRLB} approaches zero and consequently $\epsilon(t) \rightarrow 0$ as $t \rightarrow \infty$ for the ML estimator. Therefore the statement in \Cref{thm:consistentAlgo} holds true.
\end{proof}

% \begin{coro}
% As a consequence of \Cref{thm:OTpulls} we have that the \Cref{alg:formalAlgo} achieves consistent estimation whenever possible.  
% \end{coro}

\textbf{The \textsc{LBpull} strategy to choose next arm.} In the \textsc{UBpull + MLest} algorithm, we used \eqref{eq:upperbound} as a metric for choosing arm at each step. An alternative metric could be the lower bound on estimation error stated in \Cref{thm:CRLB}. Since true probability distribution $P_X$ is unknown, we estimate the expression of \Cref{thm:CRLB} as $$B(\tilde{P}_X(t), \mathbf{t}) = tr(\tilde{I}(\theta, \mathbf{t})^{-1}) + \sum_{i=1}^{n-1} \sum_{j=1}^{n-1} \tilde{I}(\theta, \mathbf{t})^{-1}(i,j),$$
where 
\begin{align}\nonumber
\tilde{I}_{i,j}(\theta,\mathbf{t})  &= \sum_{k=1}^{K}\sum_{\ell = 1}^{m_K} \frac{t_k   A_k(\ell,i)  A_k(\ell,j)  (1 - A_k(\ell,n))}{\tilde{q}_{k,\ell}(t)} + \\
&~ \frac{t_k  (1-A_k(\ell,i))  (1 - A_k(\ell,j)  A_k(\ell,n)}{\tilde{q}_{k,\ell}(t)}.
\label{eq:Pseudo_fisher_info_matrix}
\end{align}

Based on this idea, we propose a \textsc{LBpull} decision scheme which chooses arm $c_{t+1}$ at round $t+1$ if pulling $c_{t+1}$ maximizes the decrease in $B(\tilde{P}_X(t), \mathbf{t})$. More formally, \textsc{LBpull} chooses $c_{t+1}$ that maximizes 
\begin{equation}
c_{t+1} = \argmax_k B(\tilde{P}_X(t),\mathbf{t}) - B(\tilde{P}_X(t),\bar{\mathbf{t}}^{(k)}),
\label{eqn:V_tilde_def}
\end{equation}
with ties broken uniformly at random. As defined earlier, $\bar{\mathbf{t}}^{(k)} = \mathbf{t} + \mathbf{e}^{(k)}$.
% \SG{Remove $V$, Use notation of $\mathbf{t} = [t_1 \ldots t_K]$ everywhere as a vector, and define $\bar{t}$ properly}

This \textsc{LBpull} decision scheme combined with the maximum likelihood estimation scheme completes the design of \textsc{LBpull$+$ MLest} algorithm. A Formal description is presented in \Cref{alg:formalAlgoLB}. We conjecture that the \textsc{LBpull+MLest} scheme also achieves asymptotically consistent estimation whenever possible, we leave the proof as a future work.

% We demonstrated how minimizing the upper bound on estimation error of naive algorithm can be used as a metric to design the \Cref{alg:formalAlgo} which achieves consistent estimation whenever possible. An alternative idea is to use the lower bound on estimation error stated in \Cref{thm:CRLB} as the value of the function. Since true probability values are unknown, we use $\tilde{P}_X(t)$ as an estimate to evaluate the lower bound expression in \Cref{thm:CRLB}. More specifically 
% \begin{equation}
% \bar{V}_\ell(c_{t+1} = k|c_{1:t},\tilde{P}(t)) \triangleq B(\tilde{Q}(t), \mathbf{\bar{t}}), 
% \label{eqn:V_tilde_def}
% \end{equation}
% with $$B(\tilde{P}_o(t), \mathbf{t}) = tr(\tilde{I}(\theta, \mathbf{t})^{-1}) + \sum_{i=1}^{n-1} \sum_{j=1}^{n-1} \tilde{I}(\theta, \mathbf{t})^{-1}(i,j),$$
% where 
% \begin{align}\nonumber
% \tilde{I}_{i,j}(\theta,\mathbf{t})  &= \sum_{k=1}^{K}\sum_{\ell = 1}^{m_K} \frac{t_k   A_k(\ell,i)  A_k(\ell,j)  (1 - A_k(\ell,n))}{\tilde{q}_{k,\ell}(t)} \\
% &~ + \frac{t_k  (1-A_k(\ell,i))  (1 - A_k(\ell,j)  A_k(\ell,n)}{\tilde{q}_{k,\ell}(t)}.
% \label{eq:Pseudo_fisher_info_matrix}
% \end{align}
% The LB-SDL algorithm is presented in \Cref{alg:formalAlgoLB}. 

\begin{algorithm}[t]
\hrule 
\vspace{0.1in}
\begin{algorithmic}[1]
\STATE \textbf{Input:} $\{x_1, x_2, \ldots, x_n\}$, Functions $\{g_1, g_2 \ldots g_K\}$ where $g_i: \{x_1, x_2, \ldots, x_n \} \rightarrow \mathbb{R}$. Total number of steps, $T$. 
%\GJ{Initialize $n_{o_{i,k}}$ to $0$ here and use $n_{o_{i,k}}+1$ in the MLE estimate instead to emphasize the additive smoothing}
\STATE \textbf{Initialize:} $t_{k,i} = 0, \forall{i,k}.$ $\estimateProb_j(0) = \frac{1}{n}, \forall{j}$. %$c_{1:t-1} = \phi$
\STATE Eliminate Redundant Arms
\FOR{$t = 1:T$}
\STATE $c_t = \argmax_{k} B(\tilde{P}_X(t-1),\mathbf{t}) - B(\tilde{P}_X(t-1),\bar{\mathbf{t}}^{(k)})$ %\GJ{Instead of $V$ write the $LB$ expression here. We could eliminate the notation $V$ altogether.}
%\STATE $c_t = \argmin_{k} \tilde{E}(S(t),\hat{p}(t-1)|S(t-1))$
\STATE Pull arm $c_t$, observe output $y_t$
%\STATE $S(t) = \{S(t-1), c_t\}$
\IF {$y_t = z_{k,i}$}
\STATE $t_{k,i} = t_{k,i} + 1$
\ENDIF
\STATE Obtain estimates $\estimateProb_j(t)$ by obtaining fixed point solution of the set of equations described by $$\estimateProb_j(t) = \frac{1}{t} \sum_{k=1}^{K}\sum_{i=1}^{m_k} (t_{k,i}+1)\frac{A_k(i,j)\estimateProb_j(t)}{\tilde{q}_{k,i}(t)},~~$$ $~~$for $j=1,2, \ldots, n.$ 
\ENDFOR
\end{algorithmic}
\vspace{0.1in}
\hrule
\caption{\textsc{LBpull $+$ MLest}}
\label{alg:formalAlgoLB}
\end{algorithm}

\section{Simulation Results}
\label{sec:simulations}

In this section, we demonstrate the performance of our algorithm under different scenarios. We compare the estimation error of our algorithm with the Cram\'er-Rao lower bound evaluated in \Cref{sec:bounds}. Recall that Cram\'er-Rao bound gives a lower bound on the estimation error given the choice of $\{t_1, t_2, \ldots, t_K\}$. To evaluate the lower bound after a total of $t$ time slots, we find the Cram\'er-Rao bound for all combinations of $\{\alpha_1 t, \alpha_2 t, \ldots, \alpha_K t\}$ where $\sum_{i = 1}^{K}\alpha_i = 1$, and take the minimum over all such combinations. 
%\GJ{We should mention this brute-force CRB approach in the bounds on estimation error section}
We iterate $\{\alpha_1, \alpha_2 , \ldots, \alpha_K\}$ for all possible values between 0 to 1 with a precision of 0.001. Note that the existence of an algorithm that achieves the Cram\'er-Rao lower bound is not guaranteed. 
%\GJ{Instead of introducing capital $T$, why not just use $t$?} \SG{Yes. I will do this change}
%\GJ{Can we also avoid defining $\alpha$'s and just say we compare all possible allocations $\mathbf{t}$?} 
% \Cref{fig:difficulty} shows the lower bound and performance of our algorithm for different scenarios (\Cref{fig:cases}). It demonstrates that estimation error depends on the functions $\{g_1, g_2, \ldots, g_K\}$. In the presence of an invertible arm, we observe the lowest error, in which case our algorithm achieves the lower bound. \Cref{fig:difficulty} also illustrates that the performance of our algorithm is robust to choice of different choices of $\{g_1, g_2, \ldots, g_K\}$. 

\Cref{fig:comparison} shows the results of our experiment for the example considered in \Cref{fig:cases}. The experiment was repeated 1000 times and we report the average estimation error in the plot. In \Cref{tab:pullsNeededCase1} we report the average number of pulls needed by each algorithm to achieve an error of $10^{-3}$. For comparison purposes we included \textsc{RR}pull+\textsc{ML}est algorithm, which pulls arms in a round-robin manner and produces estimate using maximum likelihood estimation. As evident, the proposed \textsc{UBpull$+$MLest} and \textsc{LBpull$+$MLest} algorithms outperform the \textsc{RRpull$+$PIest} and \textsc{RRpull$+$MLest} algorithms in this scenario. While \textsc{RRpull$+$PIest} algorithm pulls each of the arms equal number of times, the proposed algorithms adapt according to shape of function and the probability distribution estimates to pull each arm different number of times. This is one of the key reason behind the successful performance of \textsc{UBpull$+$MLest} and \textsc{LBpull$+$MLest}. This effect is illustrated in \Cref{fig:pulls_compl}, where we report the average number of times each arm was pulled over 1000 experiments. The combination $\{\alpha_1 t, \alpha_2 t, \ldots, \alpha_K t\}$ resulting in the minimum Cram\'er-Rao bound for $t = 1000$
is displayed in Figures \Cref{fig:comparison} and \Cref{fig:cases} as the \lq\lq CRLB config." We see that the number of times each arm is pulled in \textsc{LBpull$+$MLest} algorithm is very close to these numbers. Given that maximum likelihood estimator is known to achieve Cram\'er-Rao bound asymptotically, this suggests that the asymptotic performance of the \textsc{LBpull$+$MLest} algorithm will be close to optimal.

We now demonstrate why an {\em active} learning framework, where the samples are obtained sequentially {\em based on the current estimate} of $P_X$, is necessary in order to achieve the best performance (e.g., to minimize the error) for the problem under consideration. This is primarily because of the fact that given a total number of available pulls, the optimal number of times that each arm needs to be pulled (in order to minimize the error) depends not only on the functions themselves (or, the sample generation matrix $A$), but also the probability distribution that the algorithm is trying to estimate. It is for this reason that we need an active learning approach where the current estimate of $P_X$ (based on prior samples) is factored into deciding which of the available functions the next sample should come from.

In order to demonstrate the need for an active learning framework, we revisit 
the case considered in \Cref{fig:cases} with the functions $g_1(X), g_2(X)$ and $g_3(X)$ kept the same. This time, we assume that the underlying probability distribution is changed from $P_X = [0.05, 0.1, 0.1, 0.2, 0.2, 0.25, 0.1]$ to  $P_X = [0.4, 0.25, 0.2, 0.05, 0.025, 0.025, 0.05]$. In the former case, we had seen that the lowest error is achieved when functions $g_1, g_2, g_3$ are sampled at a relative fraction of $0.104$, $0.317$, and $0.579$, respectively. In other words, if it is indeed the case that $P_X = [0.05, 0.1, 0.1, 0.2, 0.2, 0.25, 0.1]$, an  algorithm that chooses $g_1, g_2, g_3$ with probabilities $0.104$, $0.317$, and $0.579$, respectively, at each step (independently) would be the optimal in learning this distribution. It might be tempting to think that this {\em baseline} algorithm would do well even if the underlying probability distribution is different, as long as the functions remain the same.
However, under the modified probability distribution  $P_X = [0.4, 0.25, 0.2, 0.05, 0.025, 0.025, 0.05]$, we observe that this baseline algorithm achieves an error which is $22.8\%$ and $31.4\%$ more than \textsc{UBpull$+$MLest} and \textsc{LBpull$+$MLest}, respectively.  This highlights the fact that an algorithm considering only the shape of function may not perform well in all cases and indeed an active learning algorithm that uses the estimates $\hat{p}(t)$ at every step to make the next decision is necessary to tackle this problem (as done by \textsc{UBpull$+$MLest} and \textsc{LBpull$+$MLest}). \Cref{tab:pullsNeeded} illustrates this insight in terms of number of samples required to achieve an error of $10^{-3}$ for \textsc{UBpull$+$MLest}, \textsc{LBpull$+$MLest}, Baseline and \textsc{RRpull$+$PIest} algorithms respectively.

\begin{figure}[t]
    \centering
    \includegraphics[width=0.5\textwidth]{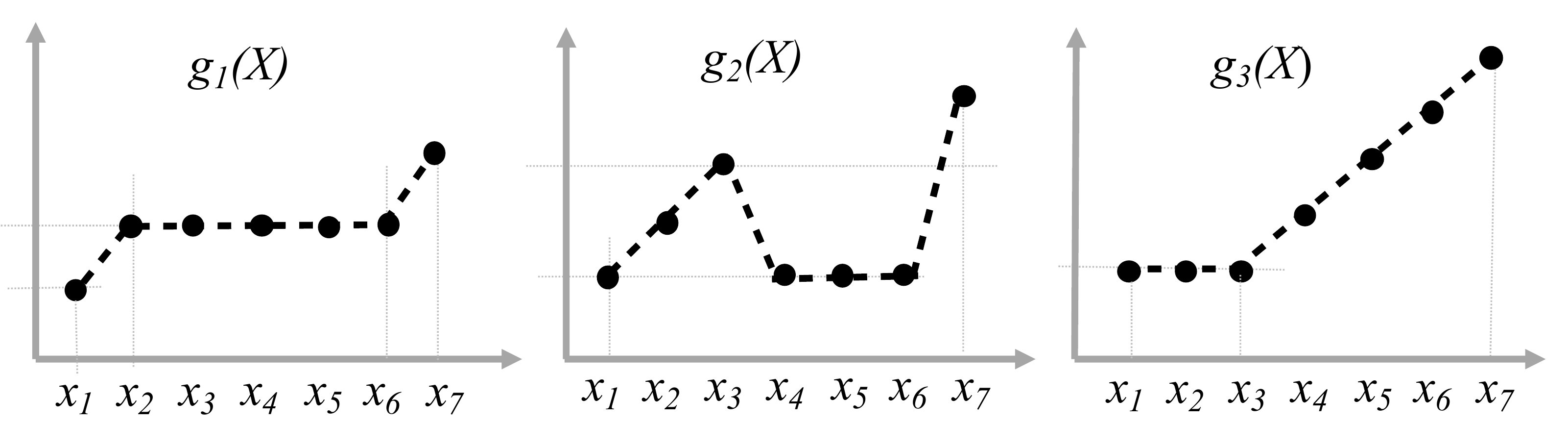}
    \caption{\sl Example of a set of functions $\{g_1, g_2, \ldots, g_K\}$. Probability distribution $P_X = [0.05, 0.1, 0.1, 0.2, 0.2, 0.25, 0.1]$.} %\GJ{The colors of these functions may be confused with the colors of the plots in Fig 5, 6 , etc. It may be better to use dark grey or black color for all the functions}} \SG{Let's discuss all plots in meeting so I can make final changes on all of them accordingly}
    \label{fig:cases}
\end{figure}

\begin{figure}[t]
    \centering
    \includegraphics[width=0.5\textwidth]{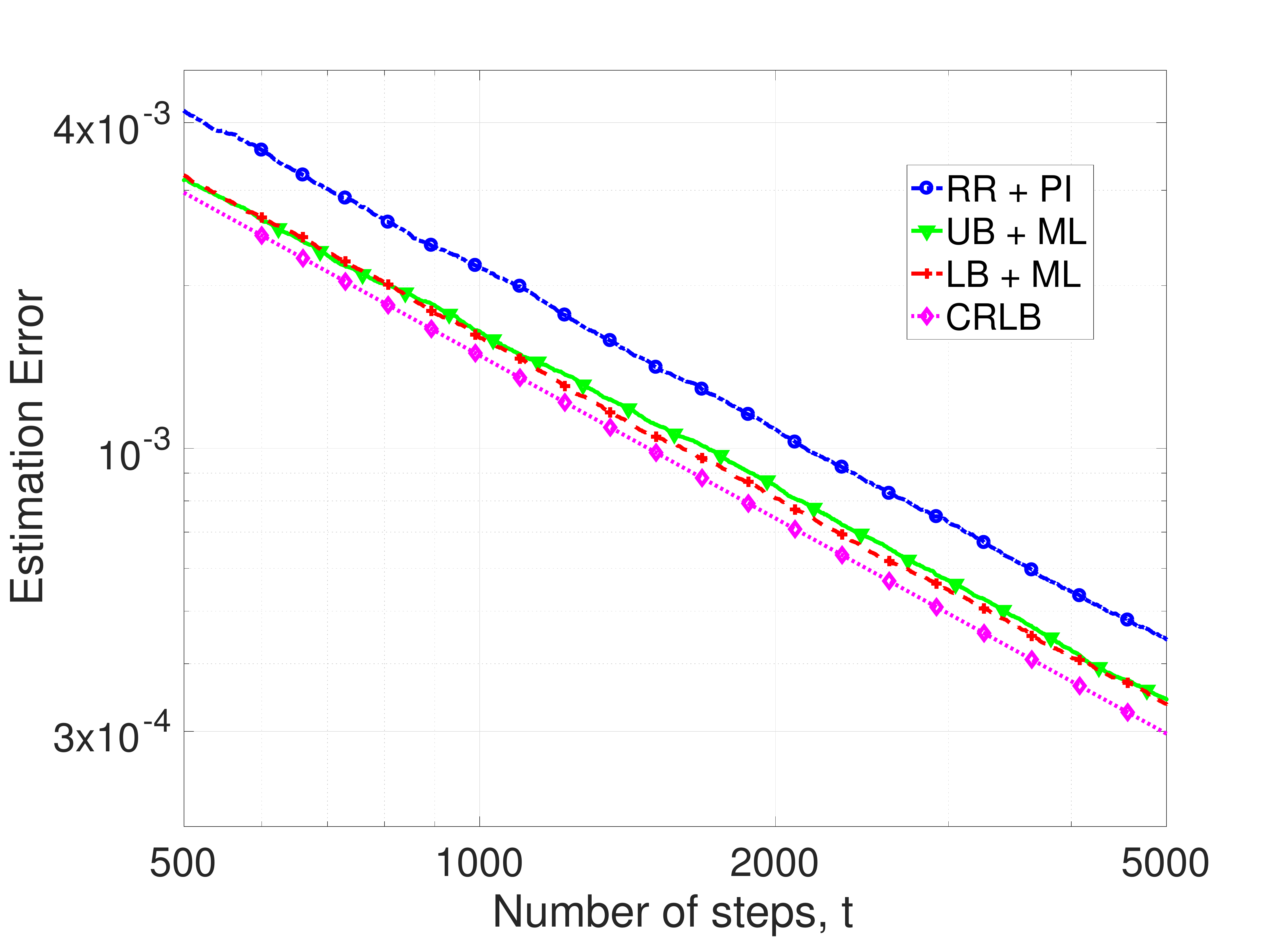}
    \caption{\sl Comparison of our policies against the \textsc{RRpull$+$PIest} algorithm for the example in \Cref{fig:cases} }
    \label{fig:comparison}
\end{figure}
%\GJ{Try reducing the axes ranges to zoom in on a smaller section of the plot}

\begin{table}
\centering
 \begin{tabular}{|c c |} 
 \hline
 Algorithm & Avg. pulls needed \\[0.5ex]
 \hline
 \textsc{LBpull$+$MLest} & 2019.1  \\ [0.5ex] 
 \textsc{UBpull$+$MLest} &2126.8 \\[0.5ex]
 \textsc{RRpull$+$MLest} & 2610.7 \\[0.5ex]
 \textsc{RRpull$+$PIest} & 2808.4 \\[0.5ex] 
 \hline
\end{tabular}
\caption{\sl Average number of pulls needed to get an estimation error of $10^{-3}$ for the example in \Cref{fig:cases}.}.
\label{tab:pullsNeededCase1}
\end{table}
% \begin{table}
% \centering
%  \begin{tabular}{|c c|} 
%  \hline
%  \textsc{LBpull$+$MLest} & \textsc{UBpull$+$MLest}  \\ [0.5ex] 
%  \hline
%  2019.1 & 2126.8   \\ [1ex] 
%  \hline
%  \textsc{RRpull$+$MLest} & \textsc{RRpull$+$PIest} \\[0.5ex]
%  \hline
%  2610.7 & 2808.4 \\[1ex]
%  \hline
% \end{tabular}
% \caption{\sl Average number of pulls needed to get an estimation error of $10^{-3}$ for the example in \Cref{fig:cases}.}.
% \label{tab:pullsNeededCase1}
% \end{table}

\begin{figure}[t]
    \centering
    \includegraphics[width=0.5\textwidth]{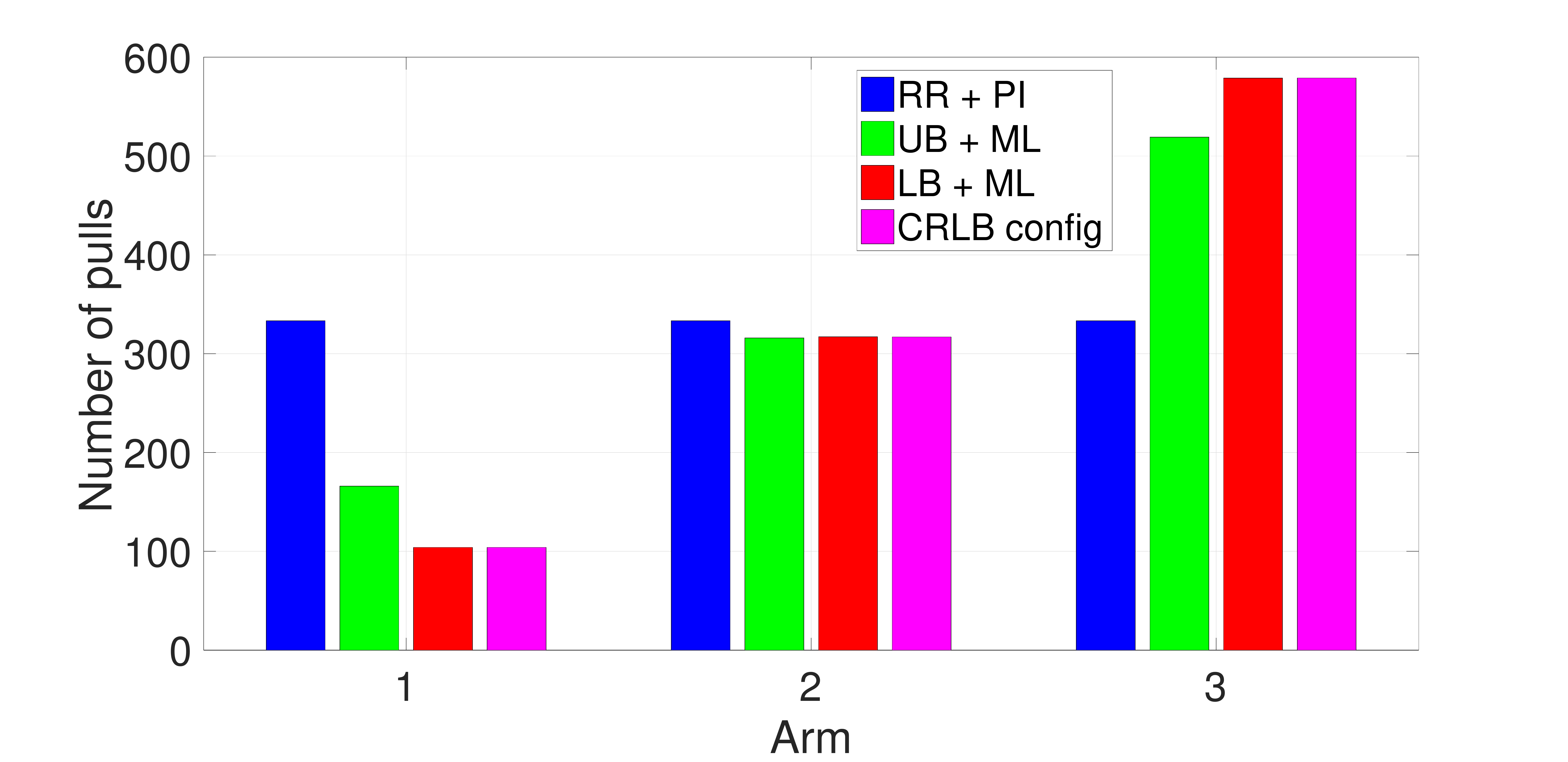}
    \caption{\sl Average number of times each arm is pulled in a total of 1000 steps for the example in \Cref{fig:cases}. Variance in the number of pulls across experiments is small for \textsc{LBpull$+$MLest} and \textsc{UBpull$+$MLest} algorithms.} 
    \label{fig:pulls_compl}
\end{figure}

% \begin{figure}[t]
%     \centering
%     \includegraphics[width=0.5\textwidth]{error_compl2.eps}
%     \caption{\sl Comparison of our policies against the baseline and \textsc{RRpull$+$PIest} algorithm for the functions in \Cref{fig:cases} where $\{p_1, p_2, p_3, p_4, p_5, p_6, p_7\}$ is $[0.4, 0.25, 0.2, 0.05, 0.025, 0.025, 0.05]$. }
%     \label{fig:comparison_case2}
% \end{figure}

% \begin{table}
% \centering
%  \begin{tabular}{|c c c c|} 
%  \hline
%  \textsc{LBpull$+$MLest} & \textsc{UBpull$+$MLest} & Baseline & \textsc{RRpull$+$PIest} \\ [0.5ex] 
%  \hline
%  1945.2 & 2019.8 & 2579.4 & 2347.4 \\ [1ex] 
%  \hline
% \end{tabular}
% \caption{\sl Average number of pulls needed to get an estimation error of $10^{-3}$ for the functions in \Cref{fig:cases} where $\{p_1, p_2, p_3, p_4, p_5, p_6, p_7\}$ is $[0.4, 0.25, 0.2, 0.05, 0.025, 0.025, 0.05]$.}.
% \label{tab:pullsNeeded}
% \end{table}

\begin{table}
\centering
 \begin{tabular}{|c c|} 
 \hline
 Algorithm & Avg. pulls needed \\[0.5ex]
 \hline
 \textsc{LBpull$+$MLest} & 1945.2 \\ [0.5ex]
 \textsc{UBpull$+$MLest} & 2019.2 \\[0.5ex]
 Baseline & 2579.4 \\[0.5ex]
 \textsc{RRpull$+$PIest} & 2347.4 \\ [0.5ex] 
 \hline
\end{tabular}
\caption{\sl Average number of pulls needed to get an estimation error of $10^{-3}$ for the functions in \Cref{fig:cases} where $[p_1, p_2, p_3, p_4, p_5, p_6, p_7]$ is $[0.4, 0.25, 0.2, 0.05, 0.025, 0.025, 0.05]$.}.
\label{tab:pullsNeeded}
\end{table}

\begin{figure}[t]
    \centering
    \includegraphics[width=0.5\textwidth]{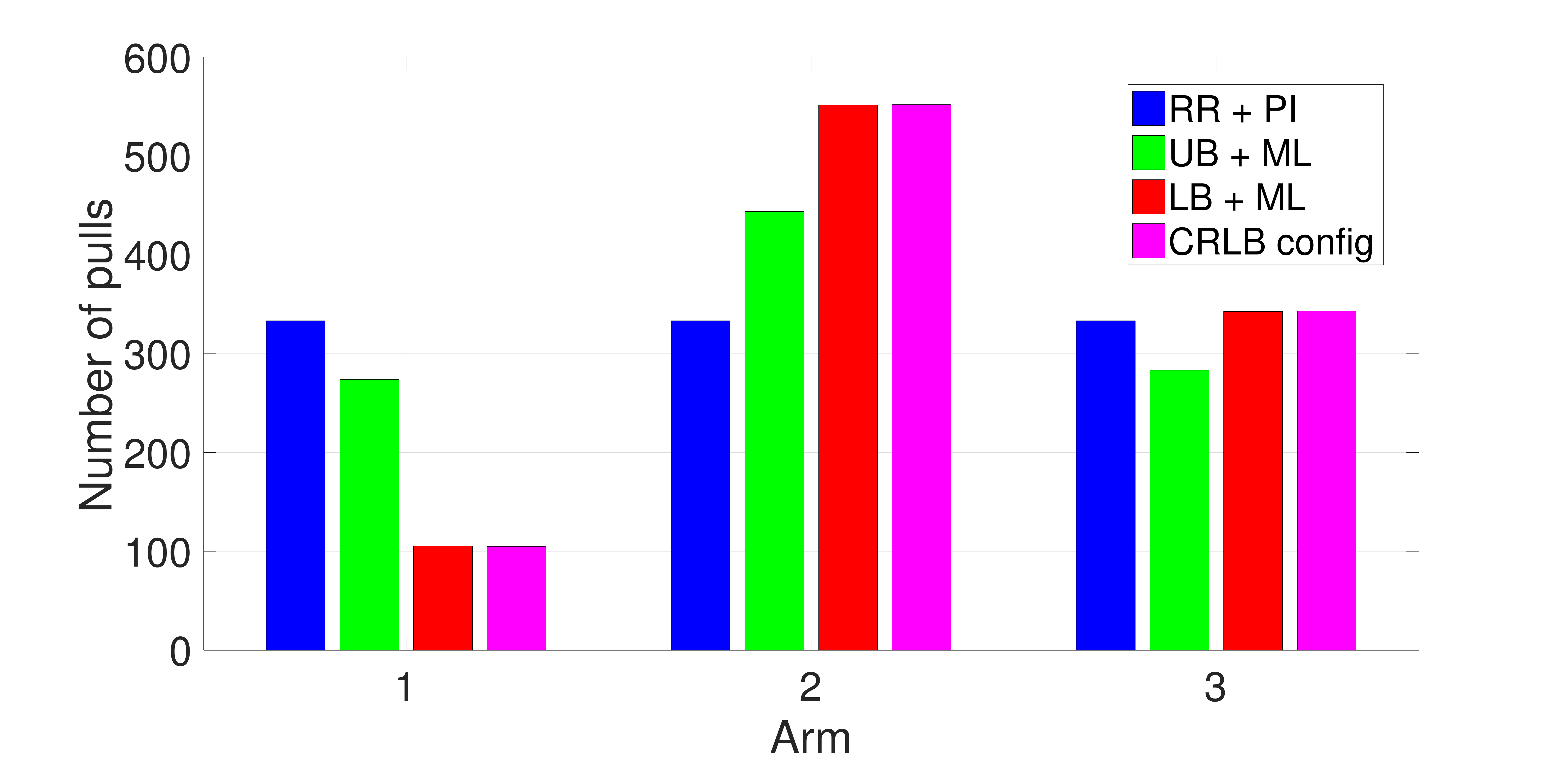}
    \caption{\sl Average number of times each arm is pulled in a total of 1000 steps for the functions in \Cref{fig:cases} where $[p_1, p_2, p_3, p_4, p_5, p_6, p_7]$ is $[0.4, 0.25, 0.2, 0.05, 0.025, 0.025, 0.05]$. Variance in the number of pulls across experiments is small for \textsc{LBpull$+$MLest} and \textsc{UBpull$+$MLest} algorithms.}
    \label{fig:pulls_compl2}
\end{figure}

\section{Concluding Remarks}
\label{sec:conclu}

%The problem studied in this paper is closely related to the multi-armed bandit problem \cite{}. In particular, the set of $K$ functions from which one is chosen at each step can be seen to be analogous to the set of arms in the multi-armed bandit problem. However, instead of trying to maximize the reward ... , here we focus on estimating as accurately as possible th underlying probability distribution. Future work might focus on ... 

We consider the problem of learning the distribution $P_X$ of a hidden random variable $X$, using indirect samples from the functions $g_1(X)$, $g_2(X)$, \dots $g_K(X)$, referred to as \emph{arms}. The samples are obtained in a sequential fashion, by choosing one of the $K$ arms in each time slot. Several applications where we wish to infer properties of a hidden random phenomenon using indirect or imprecise observations fit into our framework.
We determine conditions for asymptotically consistent estimation of $P_X$ and evaluate bounds on the estimation error. Using insights from this analysis, we propose algorithms to choose arms and combine their samples. Performance of these algorithms is is shown to outperform several intuitive baseline algorithms numerically.

Ongoing work includes obtaining result on asymptotic consistency for \textsc{LBpull$+$MLest} algorithm. Instead of the deterministic functions $g_i(X)$, we also plan to consider random observations $Y_i$, such that the conditional distribution $p(Y_i|X)$ is known.
%\GJ{Need to rewrite the conclusion}

% Appendices %
\section*{Acknowledgments}
This work was supported in part by the Department of Electrical and Computer Engineering at Carnegie Mellon University and by the National Science Foundation through grants CCF \#1617934 and CCF \#1840860.

\bibliographystyle{ieeetr}
\bibliography{multi_arm_bandit}

% Appendices %
\newpage
\appendix

% \begin{lem}
% The estimates $\tilde{q}_{k,i}(t)$ of output probabilities $q_{k,i}$ produced by \Cref{alg:formalAlgo} are bounded away from $0$ and $1$, i.e., we have $0 < \tilde{q}_{k,i}(t) < 1$ $\forall{i,k}$.
% \label{lem:qpositive}
% \end{lem}

\begin{lem}
The estimates $\tilde{q}_{k,i}(t)$ of output probabilities $q_{k,i}$ produced by \Cref{alg:formalAlgo} are bounded away from $0$ and $1$.
%i.e., we have $0 < \tilde{q}_{k,i}(t) < 1$ $\forall{i,k,t}$.
More formally, we have $ \lim \inf_{t \to \infty} \tilde{q}_{k,i}(t) > 0$ and $\lim \sup_{t \to \infty} \tilde{q}_{k,i}(t) < 1$ $\forall{i,k}$.
\label{lem:qpositive}
\end{lem}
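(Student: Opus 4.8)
The plan is to work directly from the smoothed maximum-likelihood characterization of $\tilde{P}_X(t)$ and to reduce the two-sided bound on $\tilde{q}_{k,i}(t)$ to a single statement about the coordinates $\estimateProb_j(t)$. Since $\tilde{q}_{k,i}=\sum_{j=1}^{n}A_k(i,j)\estimateProb_j$, we have $\tilde{q}_{k,i}\geq\estimateProb_{j_0}$ for any $j_0$ with $A_k(i,j_0)=1$, and $1-\tilde{q}_{k,i}=\sum_{j:A_k(i,j)=0}\estimateProb_j\geq\estimateProb_{j_1}$ for any $j_1$ with $A_k(i,j_1)=0$. After redundant-arm removal every retained arm is non-constant, so each row of $A_k$ contains both a $1$ and a $0$; hence it suffices to prove $\liminf_{t\to\infty}\estimateProb_j(t)>0$ for every $j$, and the two claimed bounds follow at once.

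First I would record the optimality conditions of the smoothed log-likelihood \eqref{eqn:LogLikelihood}. Introducing a Lagrange multiplier for the constraint $\sum_j\estimateProb_j=1$ and using the stationarity relation that defines the fixed point in \Cref{alg:formalAlgo}, then multiplying by $\estimateProb_j$ and summing over $j$, shows the multiplier equals $t+m$, where $m=m_1+\cdots+m_K$. This yields, for every output, the inequality $\tilde{q}_{k,i}(t)\geq (t_{k,i}+1)/(t+m)$. The inequality already re-proves that $\tilde{q}_{k,i}(t)\in(0,1)$ at every finite $t$: a vanishing $\tilde{q}_{k,i}$, or equivalently (for a non-constant arm) its sibling output, would send the log-likelihood to $-\infty$, whereas the uniform distribution attains a finite value, so the maximizer stays in the open interior. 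However, this bound degrades like $1/t$, so it does not by itself provide the uniform separation from $0$ and $1$ that the lemma asserts.

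The main work is therefore to upgrade this to a bound that is uniform in $t$. For this I would use that $\sum_k t_k=t\to\infty$, so at least one arm is pulled $\Omega(t)$ times; by the Strong Law of Large Numbers the empirical output frequencies of every such arm converge almost surely to the true $q_{k,i}=\sum_j A_k(i,j)p_j$, which lie strictly inside $(0,1)$ because $p_j>0$ for all $j$ and the arm is non-constant. Since the large-count terms dominate \eqref{eqn:LogLikelihood}, the maximizer must keep the $\tilde{q}_{k,i}(t)$ of the heavily pulled arms close to these interior limits, and, invoking $\mathrm{rank}(A)=n$, I would argue that this confines $\tilde{P}_X(t)$ to a compact subset of the interior of the probability simplex for all large $t$, giving $\liminf_t\estimateProb_j(t)>0$ and closing the argument via the reduction above.

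The hard part is exactly this last step, and it is delicate precisely because \Cref{thm:consistentAlgo} invokes the lemma for arms that are pulled only $\OO(1)$ times. One must rule out $\tilde{P}_X(t)$ drifting toward a boundary face of the simplex along directions that the heavily pulled arms alone do not constrain (i.e.\ the case where those arms do not by themselves have rank $n$), while the arm-selection rule is itself a data-dependent feedback that reacts to such drift through the quantities $\zeta_k$. Controlling this coupling---showing that the add-one smoothing together with the rank-$n$ condition keeps every coordinate away from $0$ even under the adaptive and highly uneven sampling produced by \textsc{UBpull}---is where the real difficulty lies, and I would expect most of the effort to go into making this coupling rigorous rather than into the routine optimality and concentration estimates.
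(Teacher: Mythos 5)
There is a genuine gap, and you have in fact pinpointed it yourself: your argument establishes that $\tilde q_{k,i}(t)$ lies in the open interval $(0,1)$ at every finite $t$ (both via the $-\infty$ log-likelihood argument and via the stationarity bound $\tilde q_{k,i}(t)\geq (t_{k,i}+1)/(t+m)$, which is correct), but the actual content of the lemma is the \emph{uniform-in-$t$} separation $\liminf_{t\to\infty}\tilde q_{k,i}(t)>0$, and for that step you only offer a sketch whose central difficulty you explicitly leave open. Your proposed route --- reduce to $\liminf_{t\to\infty}\tilde p_j(t)>0$ for every coordinate $j$, then pin $\tilde P_X(t)$ to a compact interior subset of the simplex using the heavily pulled arms together with $\mathrm{rank}(A)=n$ --- asks for strictly more than the lemma requires and runs into exactly the obstruction you name: the arms pulled $\Theta(t)$ times need not have sample-generation rows spanning $\mathbb{R}^n$, so they cannot by themselves forbid drift of $\tilde P_X(t)$ toward a boundary face, and one cannot assume all arms are pulled infinitely often without circularity, since \Cref{thm:consistentAlgo} uses this lemma to prove precisely that.

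The paper's proof avoids per-coordinate bounds and rank conditions entirely. It compares the achieved likelihood with that of the \emph{true} distribution: writing out $L(\mathcal{D}_t;P_X)-L(\mathcal{D}_t;\tilde P_X(t))$, the data-driven part of the difference is $(1+o(1))\sum_{k\in\mathcal{S}}t_k\sum_i q_{k,i}\log\bigl(q_{k,i}/\tilde q_{k,i}(t)\bigr)$, a nonnegative scaled KL divergence (using the SLLN for arms pulled infinitely often), while the add-one smoothing contributes a term $\log\bigl(1/\tilde q_{k,i}(t)\bigr)\geq 0$ for \emph{every} output of \emph{every} arm --- including never-observed outputs of arms pulled only $\OO(1)$ times --- plus bounded constants $\log q_{k,i}$ (finite since $p_j>0$ for all $j$). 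If $\liminf_{t\to\infty}\tilde q_{k,i}(t)=0$ for some $(k,i)$, that single smoothing term is $\omega(1)$ along a subsequence, making the whole difference eventually positive and contradicting the optimality of the MLE. This is the idea your proposal is missing: the comparison point is the fixed true distribution $P_X$, and it is the $+1$ smoothing (not the rank condition or the heavily pulled arms) that controls the outputs of lightly pulled arms. The bound $\limsup_{t\to\infty}\tilde q_{k,i}(t)<1$ then follows from $\sum_i\tilde q_{k,i}(t)=1$ and $m_k>1$, as in your reduction.
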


\begin{proof}
First, we show that
$0< \tilde{q}_{k,i}(t) < 1$
for all $t=1, 2, \ldots$. 
We recall that the proposed estimation scheme for $\tilde{P}_X(t)$ maximizes the log likelihood in \eqref{eqn:LogLikelihood} at each time step $t$. 
Fix $t=1, 2, \ldots$. 
It is clear from \eqref{eqn:LogLikelihood}
that any estimate $\tilde{P}_X(t)$ for which $\tilde{q}_{k,i} = 0$ for any $k,i$ results in $L(\mathcal{D}_t;\tilde{p}(t)) = -\infty$. On the other hand, any estimate $\tilde{P}_X(t)$ that has $q_{k,i} > 0$ $\forall{k,i}$ (say $\tilde{p}_j(t) = 1/n$  $\forall{j}$) would lead to $L(\mathcal{D}_t, \tilde{P}_X(t)) > -C$, for some $C < \infty$. Since our algorithm returns the estimate
$\tilde{P}_X(t)$ that maximizes 
$L(\mathcal{D}_t, \tilde{P}_X(t))$,
we must have $\tilde{q}_{k,i}(t) > 0$ $\forall{k,i}$. Since $\sum_{i = 1}^{m_k} \tilde{q}_{k,i}(t) = 1$ $\forall{k}$, this in turn implies that $\tilde{q}_{k,i}(t) < 1$ $\forall{k,i}$. This last step also uses the fact that  $m_k > 1$ for all non-redundant arms $k$. %$\forall{k}$, as redundant arms are already removed by \Cref{alg:formalAlgo}.
Combining, we have $0 < \tilde{q}_{k,i}(t) < 1$ for  all $t=1, 2, \ldots$ and 
$\forall{i,k}$.

Next, we show that 
$\lim \inf_{t \to \infty} \tilde{q}_{k,i}(t)>0$. 
Let $\mathcal{S}$ denote the set of arms for which $t_k \rightarrow \infty$. Similarly we denote ${\mathcal{S}^c}$ as the set of arms for which $t_k = \OO(1)$. 
Assume towards a contradiction that the estimates $\tilde{P}_X(t)$
returned by our algorithm satisfies $\lim \inf_{t \to \infty} \tilde{q}_{k,i}(t) = 0$, for some $k,i$. By strong law of large numbers, we have  $t_{k,i} \rightarrow t_k q_{k,i}$ almost surely $\forall{k \in S}$. The log likelihood expression of $\eqref{eqn:LogLikelihood}$ for the aforementioned distribution $\tilde{P}_X(t)$ satisfies
\begin{align}
&L(\mathcal{D}_t;\tilde{P}_X(t)) = \sum_{k = 1}^{K}\sum_{i=1}^{m_k} (t_{k,i}+1) \log(\tilde{q}_{k,i}(t))\\
&= (1+o(1))\sum_{k \in \mathcal{S}}t_k \sum_{i = 1}^{m_k} q_{k,i} \log(\tilde{q}_{k,i}(t)) + \sum_{k \in \mathcal{S}} \sum_{i = 1}^{m_k} \log(\tilde{q}_{k,i}(t)) \nonumber\\
& ~~~~~ + \sum_{k \in \mathcal{S}^c}\sum_{i = 1}^{m_k} (t_{k,i} + 1) \log(\tilde{q}_{k,i}(t)) 
\end{align}
On the other hand, the log likelihood expression for the actual distribution $P_X$ is given by
\begin{align}
&L(\mathcal{D}_t;P_X) = \sum_{k = 1}^{K}\sum_{i=1}^{m_k} (t_{k,i}+1) \log(q_{k,i})\\
&= (1+o(1)) \sum_{k \in \mathcal{S}}t_k \sum_{i = 1}^{m_k} q_{k,i} \log(q_{k,i}) + \sum_{k \in \mathcal{S}} \sum_{i = 1}^{m_k} \log(q_{k,i})  \nonumber\\
& ~~~~~ + \sum_{k \in \mathcal{S}^c}\sum_{i = 1}^{m_k} (t_{k,i} + 1) \log(q_{k,i}) 
\end{align}

Since \Cref{alg:formalAlgo} generates estimates $\tilde{P}_X(t)$ such that for each $t = 1, 2, \ldots$, $\tilde{P}_X(t)$ maximizes $L(\mathcal{D}_t;\tilde{P}_X(t))$ among all possible distributions, we must have 
\begin{equation}
    L(\mathcal{D}_t;P_X) - L(\mathcal{D}_t;\tilde{P}_X(t)) \leq 0 ~ \forall t.
    \label{eqn:likelihoodFact}
\end{equation}  
However, the difference of $L(\mathcal{D}_t;P_X)$ and $L(\mathcal{D}_t, \tilde{P}_X(t))$, with $\tilde{P}_X(t)$ denoting a distribution that satisfies $\lim \inf_{t \to \infty} \tilde{q}_{k,i}(t) = 0$, for some $k,i$, is given by
\begin{align}
&L(\mathcal{D}_t;P_X) - L(\mathcal{D}_t;\tilde{P}_X(t)) 
\nonumber \\
& = (1+o(1)) \sum_{k \in \mathcal{S}}t_k \sum_{i = 1}^{m_k} q_{k,i} \log\left(\frac{q_{k,i}}{\tilde{q}_{k,i}(t)}\right) \nonumber \\
& ~ + \sum_{k \in \mathcal{S}} \sum_{i = 1}^{m_k} \log\left(\frac{q_{k,i}}{\tilde{q}_{k,i}(t)}\right) + \sum_{k \in \mathcal{S}^c}\sum_{i = 1}^{m_k} (t_{k,i} + 1) \log\left(\frac{q_{k,i}}{\tilde{q}_{k,i}(t)}\right) \label{firstOne}\\
&= (1+o(1)) \sum_{k \in \mathcal{S}}t_k \sum_{i = 1}^{m_k} q_{k,i} \log\left(\frac{q_{k,i}}{\tilde{q}_{k,i}(t)}\right) \nonumber \\
&~+ \sum_{k \in \mathcal{S}}\sum_{i = 1}^{m_k} \log (q_{k,i}) + \sum_{k \in \mathcal{S}^c}\sum_{i = 1}^{m_k} (t_{k,i} + 1) \log(q_{k,i}) \nonumber \\
&~+ \sum_{k \in \mathcal{S}}\sum_{i = 1}^{m_k} \log \left(\frac{1}{\tilde{q}_{k,i}(t)}\right) + \sum_{k \in \mathcal{S}^c}\sum_{i = 1}^{m_k} (t_{k,i} + 1) \log \left(\frac{1}{\tilde{q}_{k,i}(t)}\right). 
\end{align}

The first term in \Cref{firstOne} is non-negative since $\sum_{i = 1}^{m_k} q_{k,i} \log\left(\frac{q_{k,i}}{\tilde{q}_{k,i}(t)}\right) \geq 0$, as this is the KL-Divergence between the output probability distribution of arm $k$ and the estimated output probability distribution of arm $k$. Under the assumption that 
 $\lim \inf_{t \to \infty} \tilde{q}_{k,i}(t) = 0$, for some $k,i$, the sum of the second and third terms in \Cref{firstOne} is $\omega(1)$ given that the actual probabilities satisfy $q_{k,i} > 0$ $\forall{k,i}$ (as in our setup $p_j > 0$ $\forall{j}$). Namely, we have
\begin{equation}
    L(\mathcal{D}_t;P_X) - L(\mathcal{D}_t;\tilde{P}_X(t)) = \omega(1).
\end{equation}

This contradicts the fact of \Cref{eqn:likelihoodFact}. 
%This contradicts the fact that our algorithm generates estimates $\tilde{P}_X(t)$ such that for each $t=1, 2, \ldots$, $\tilde{P}_X(t)$ maximizes $L(\mathcal{D}_t;\tilde{P}_X(t))$ among all possible distributions. 
Thus, the estimates $\tilde{P}_X(t)$ generated by our algorithm can not satisfy $\lim \inf_{t \to \infty} \tilde{q}_{k,i}(t) = 0$, for some $k,i$. In other words, we
have $\lim \inf_{t \to \infty} \tilde{q}_{k,i}(t) > 0$ $\forall{k,i}$. Given that $\sum_{i=1}^{m_k} \tilde{q}_{k,i}(t) = 1$, this also implies $\lim \sup_{t \to \infty} \tilde{q}_{k,i}(t) < 1$ $\forall{i,k}$ establishing Lemma 
\ref{lem:qpositive}.
\end{proof}

\end{document}